\def\eqref#1{equation~\ref{#1}}
\def\1{\bm{1}}
\DeclareMathAlphabet{\mathsfit}{\encodingdefault}{\sfdefault}{m}{sl}
\SetMathAlphabet{\mathsfit}{bold}{\encodingdefault}{\sfdefault}{bx}{n}
\theoremstyle{plain}
\newtheorem{theorem}{Theorem}[section]
\newtheorem{lemma}[theorem]{Lemma}
\newtheorem{corollary}[theorem]{Corollary}
\newtheorem*{theorem*}{Theorem}
\theoremstyle{definition}
\newtheorem{definition}[theorem]{Definition}
\newtheorem{example*}{Example}
\theoremstyle{remark}
\newtheorem{remark}[theorem]{Remark}
\newcommand{\spmat}[1]{%
  \begin{smallmatrix}#1\end{smallmatrix}%
}
\title{Simplicial Hopfield networks}
\author{Thomas F. Burns \\
Neural Coding and Brain Computing Unit\\
OIST Graduate University, Okinawa, Japan \\
\texttt{thomas.burns@oist.jp} \\
\And
Tomoki Fukai \\
Neural Coding and Brain Computing Unit\\
OIST Graduate University, Okinawa, Japan \\
\texttt{tomoki.fukai@oist.jp} \\
}
\begin{document}

\maketitle

\begin{abstract}
Hopfield networks are artificial neural networks which store memory patterns on the states of their neurons by choosing recurrent connection weights and update rules such that the energy landscape of the network forms attractors around the memories. How many stable, sufficiently-attracting memory patterns can we store in such a network using $N$ neurons? The answer depends on the choice of weights and update rule. Inspired by setwise connectivity in biology, we extend Hopfield networks by adding setwise connections and embedding these connections in a simplicial complex. Simplicial complexes are higher dimensional analogues of graphs which naturally represent collections of pairwise and setwise relationships. We show that our simplicial Hopfield networks increase memory storage capacity. Surprisingly, even when connections are limited to a small random subset of equivalent size to an all-pairwise network, our networks still outperform their pairwise counterparts. Such scenarios include non-trivial simplicial topology. We also test analogous modern continuous Hopfield networks, offering a potentially promising avenue for improving the attention mechanism in Transformer models.
\end{abstract}

\section{Introduction}\label{sec:intro}

Hopfield networks \citep{Hopfield1982}\footnote{After the proposal of \cite{Marr1971}, many similar models of associative memory were proposed, e.g., those of \cite{Nakano1972}, \cite{Amari1972}, \cite{Little1974}, and \cite{Stanley1976} -- all before \cite{Hopfield1982}. Nevertheless, much of the research literature refers to and seems more proximally inspired by \cite{Hopfield1982}. Many of these models can also be considered instances of the Lenz-Ising model \citep{LenzIsing} with infinite-range interactions.} store memory patterns in the weights of connections between neurons. In the case of pairwise connections, these weights translate to the synaptic strength between pairs of neurons in biological neural networks. In such a Hopfield network with $N$ neurons, there will be $N \choose 2$ of these pairwise connections, forming a complete graph. Each edge is weighted by a procedure which considers $P$ memory patterns and which, based on these patterns, seeks to minimise a defined energy function such that the network's dynamics are attracted to and ideally exactly settles in the memory pattern which is nearest to the current states of the neurons. The network therefore acts as a content addressable memory -- given a partial or noise-corrupted memory, the network can update its states through recurrent dynamics to retrieve the full memory. Since its introduction, the Hopfield network has been extended and studied widely by neuroscientists \citep{Griniasty1993,Schneidman2006,Sridhar2021,Burns2022}, physicists \citep{Amit1985,Agliari2013,Leonetti2021}, and computer scientists \citep{Widrich2020,Millidge2022}. Of particular interest to the machine learning community is the recent development of modern Hopfield networks \citep{Krotov2016} and their close correspondence \citep{Ramsauer2021} to the attention mechanism of Transformers \citep{AttentionIsAllYouNeed}.

An early \citep{Amit1985,McEliece1987} and ongoing \citep{Hillar2018} theme in the study of Hopfield networks has been their memory storage capacity, i.e., determining the number of memory patterns which can be reliably stored and later recalled via the dynamics. As discussed in Appendix \ref{appendix:purpose}, this theoretical and computational exercise serves two purposes: (i) improving the memory capacity of such models for theoretical purposes and computational applications; and (ii) gaining an abstract understanding of neurobiological mechanisms and their implications for biological memory systems. Traditional Hopfield networks with binary neuron states, in the limit of $N \rightarrow \infty$ and $P \rightarrow \infty$, maintain associative memories for up to approximately $0.14 N$ patterns \citep{Amit1985,McEliece1987}, and fewer if the patterns are statistically or spatially correlated \citep{Lowe1998}. However, by a clever reformulation of the update rule based on the network energy, this capacity can be improved to $N^{d-1}$, where $d \geq 2$ \citep{Krotov2016}, and even further to $2^{N/2}$ \citep{Demircigil2017}. Networks using these types of energy-based update rules are called modern Hopfield networks. \cite{Krotov2016} (like \cite{Hopfield1984}) also investigated neurons which took on continuous states. Upon generalising this model by using the softmax activation function, \cite{Ramsauer2021} showed a connection to the attention mechanism of Transformers \citep{AttentionIsAllYouNeed}. However, to the best of our knowledge, these modern Hopfield networks have not been extended further to include explicit setwise connections between neurons, as has been studied and shown to improve memory capacity in traditional Hopfield networks \citep{Peretto1986,Lee1986,Baldi1987,Newman1988}. Indeed, \cite{Krotov2016}, who introduced modern Hopfield networks, make a mathematical analogy between their energy-based update rule and setwise connections given their energy-based update rule can be interpreted as allowing individual pairs of pre- and post-synaptic neurons to make multiple synapses with each other -- making pairwise connections mathematically as strong as equivalently-ordered setwise connections\footnote{Work by \cite{Horn1988} study almost the same system but with an slight modification to the traditional update rule, whereas \cite{Krotov2016} use their modern, energy-based update rule.}. \cite{Demircigil2017} later proved this analogy to be accurate in terms of theoretical memory capacity. By adding explicit setwise connections to modern Hopfield networks, we essentially allow all connections (pairwise and higher) to increase their strength -- following the same interpretation, this can be thought of as allowing both pairwise and setwise connections between all neurons, any of which may be precisely controlled.

Functionally, setwise connections appear in abundance in biological neural networks. What's more, these setwise interactions often modulate and interact with one another in highly complex and nonlinear fashions, adding to their potential computational expressiveness. We discuss these biological mechanisms in Appendix \ref{appendix:biology}. There are many contemporary models in deep learning which implicitly model particular types of setwise interactions \citep{Jayakumar2020Multiplicative}. To explicitly model such interactions, we have multiple options. For reasons we discuss in Appendix \ref{appendix:simplicial-complexes}, we choose to model our setwise connections using a simplicial complex.

We therefore develop and study \textit{Simplicial Hopfield Networks}. We weight the simplices of the simplicial complex to store memory patterns and generalise the energy functions and update rules of traditional and modern Hopfield networks. Our main contributions are: 
\begin{itemize}
\item \textit{We introduce extensions of various Hopfield networks with setwise connections.} In addition to generalising Hopfield networks to include explicit, controllable setwise connections based on an underlying simplicial structure, we also study whether the topological features of the underlying structure influences performance. 
\item \textit{We prove and discuss higher memory capacity in the general case of simplicial Hopfield networks.} For the fully-connected simplicial Hopfield network, we prove a larger memory capacity than previously shown by \cite{Newman1988,Demircigil2017} for higher-degree Hopfield networks.
\item \textit{We empirically show improved performance under parameter constraints.} By restricting the total number of connections to that of pairwise Hopfield networks with a mixture of pairwise and setwise connections, we show simplicial Hopfield networks retain a surprising amount of improved performance over pairwise networks but with fewer parameters, and are robust to topological variability.
\end{itemize}

\section{Simplicial Hopfield networks}

\subsection{Simplicial complexes}\label{subsec:SCs}
Simplicial complexes are mathematical objects which naturally represent collections of setwise relationships. Here we use the combinatorial form, called an \textit{abstract simplicial complex}. Although, to build intuition and visualise the simplicial complex, we also refer to their geometric realisations.

\begin{definition}\label{simplicial-complex}
Let $K$ be a subset of $2^{[N]}.$ The subset $K$ is an abstract simplicial complex if for any $\sigma \in K,$ the condition $\rho \subseteq \sigma$ gives $\rho \in K,$ for any $\rho \subseteq \sigma.$
\end{definition}      

In other words, an abstract simplicial complex $K$ is a collection of finite sets closed under taking subsets. A member of $K$ is called a \textit{simplex} $\sigma$. A $k$--dimensional simplex (or $k$--simplex) has cardinality $k+1$ and $k+1$ faces which are $(k-1)$--simplices (obtained by omitting one element from $\sigma$). If a simplex $\sigma$ is a face of another simplex $\tau$, we say that $\tau$ is a \textit{coface} of $\sigma$. We denote the set of all $k$--simplices in $K$ as $K_k$.

Geometrically, for $k = 0, 1, 2,$ and $3$, a $k$--simplex is, respectively, a point, line segment, triangle, and tetrahedron. Therefore, one may visualise a simplicial complex as being constructed by gluing together simplices such that every finite set of vertices in $K$ form the vertices of at most one simplex. This structure makes it possible to associate every setwise relationship uniquely with a $k$--simplex identified by its elements, which in our case are neurons (see Figure \ref{fig:illustration-and-weight-distributions}A). Simplices in $K$ which are contained in no higher dimensional simplices, i.e., they have no cofaces, are called the \textit{facets} of \textit{K}. The dimension of $K$, $\text{dim}(K)$, is the dimension of its largest facet. We call a simplicial complex $K$ a \textit{$k$--skeleton} when all possible faces of dimension $k$ exist and $\text{dim}(K)=k$.

\subsection{Model}\label{subsec:model}
\begin{figure}
    \centering
    \includegraphics[width=\textwidth]{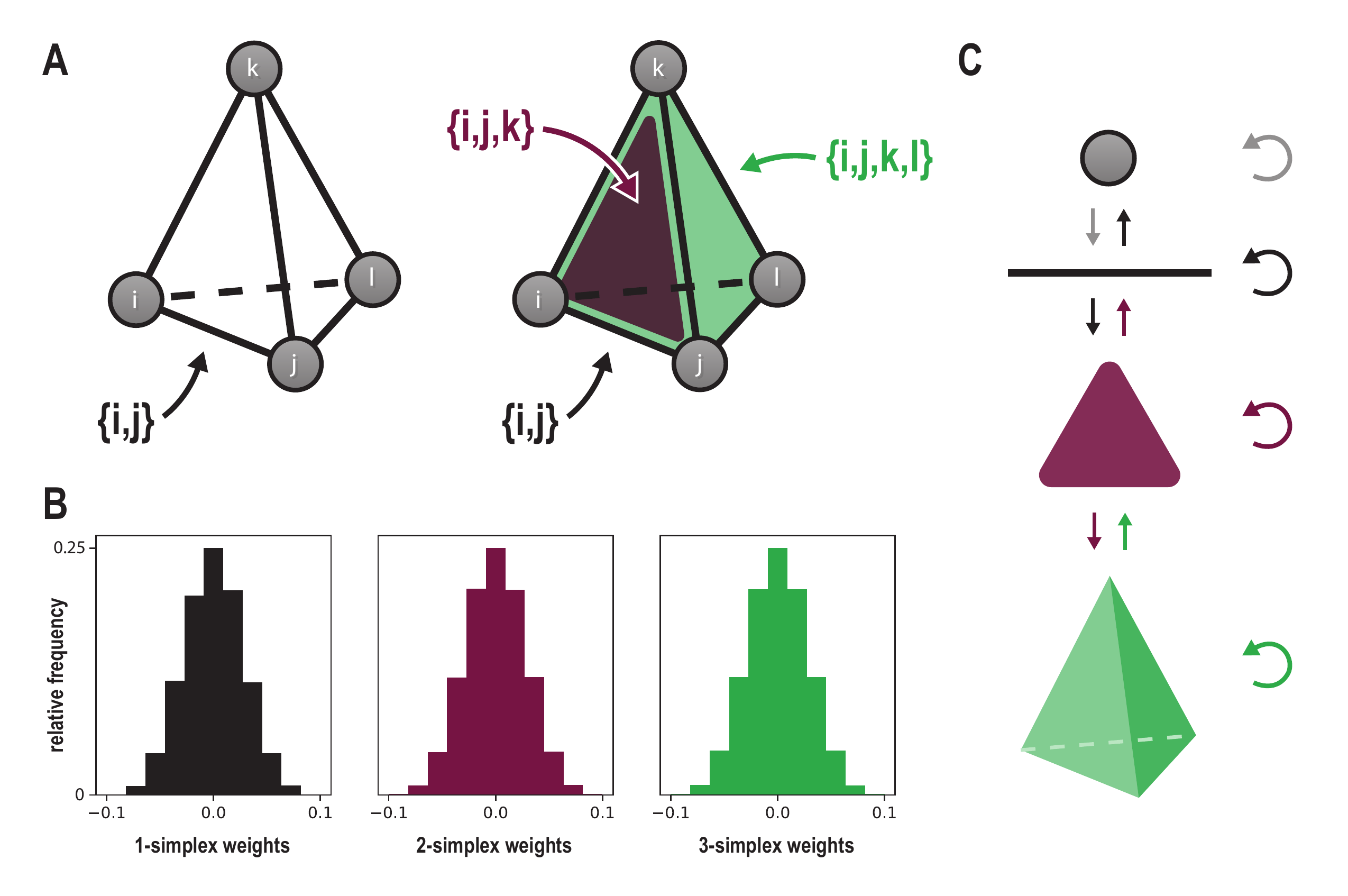}
    \caption{\textbf{A.} Comparative illustrations of connections in a pairwise Hopfield network (left) and a simplicial Hopfield network (right) with $N=4$. In a simplicial Hopfield network, $\sigma={\{i,j\}}$ is an edge ($1$--simplex), $\sigma={\{i,j,k\}}$ is a triangle ($2$--simplex), $\sigma={\{i,j,k,l\}}$ is a tetrahedron ($3$--simplex), and so on. \textbf{B.} Connection weight histograms of $1$--, $2$--, and $3$--simplices in a simplicial Hopfield network. In the binary case, the x-axis range is $[-P/N,+P/N]$. Here, $N=100$ and $P=10$, thus the range is $[-0.1,+0.1]$. Note that each dimension shows a similar, Gaussian distribution of weights (although there are different absolute numbers of these weights; see `Mixed diluted networks' in Section \ref{para:mixed-diluted}). \textbf{C.} Illustration of the hierarchical relationship between elements in the complex, up to $3$--simplices, with arrows indicating potential sources of weight modulation or interaction, e.g., between (co)faces or using Hodge Laplacians within the same dimension. Such modulations and interactions (including their biological interpretations) are discussed in Appendices \ref{appendix:biology} and \ref{appendix:simplicial-complexes}.}
    \label{fig:illustration-and-weight-distributions}
\end{figure}

A network of $N$ neurons is modelled by $N$ spins. Let $K$ be a simplicial complex on $N$ vertices. In the binary neuron case, $S_j^{(t)}=\pm1$ at time-step $t$. Given a set of neurons $\sigma$ (which contains the neuron $i$ and is a unique $(|\sigma|-1)$--simplex in $K$), $w(\sigma)$ is the associated simplicial weight and $S_\sigma^{(t)}$ the product of their spins. Spin configurations correspond to patterns of neural firing, with dynamics governed by a defined energy. The traditional model is defined by energy and weight functions
\begin{align}
    E =-{\sum_{\sigma \in K} w(\sigma) S_\sigma^{(t)}} \hskip0.15\textwidth w(\sigma) = \frac{1}{N} \sum_{\mu =1}^{P} \xi_\sigma^\mu, \label{eq:trad}
\end{align}
with $\xi_i^\mu$ ($=\pm 1$) static variables being the $P$ binary memory patterns stored in the simplicial weights. Similar for spins, $\xi_\sigma^\mu$ is the product of the static pattern variables for the set of neurons $\sigma$ in the pattern $\mu$. Figure \ref{fig:illustration-and-weight-distributions}B shows examples of the resulting Gaussian distributions of weights at each dimension of the simplicial complex. We use these weights to update the state of a neuron $i$ by applying the traditional Hopfield update rule
\begin{equation}
    S_i^{(t)} = \Theta \left( \sum_{\sigma \in K} w(\sigma) S_{\sigma \setminus i}^{(t-1)} \right) \hskip0.15\textwidth \Theta (x)=
\begin{cases} 
      1 & \text{if $x \geq 0$} \\
      -1 & \text{if $x < 0$}
   \end{cases}. \label{eq:trad-update-rule}
\end{equation}
When $K$ is a $1$-skeleton, this becomes the traditional pairwise Hopfield network \citep{Hopfield1982}. In the modern Hopfield case, the energy function and update rule are
\begin{align}
    E &= - \sum_{\mu =1}^{P}{\sum_{\sigma \in K} F(\xi_{\sigma}^\mu S_\sigma^{(t)})}\label{eq:modern-energy} \\
    S_i^{(t)} &= \textrm{sgn} \left[ \sum_{\mu =1}^{P} \left( F ( 1 \cdot \xi_i^\mu + \sum_{\sigma \in K} \xi_{\sigma \setminus i}^\mu S_{\sigma \setminus i}^{(t-1)} ) - F ( -1 \cdot \xi_i^\mu + \sum_{\sigma \in K} \xi_{\sigma \setminus i}^\mu S_{\sigma \setminus i}^{(t-1)} ) \right) \right]\label{eq:modern-update},
\end{align}
where the function $F$ can be chosen, for example, to be of a polynomial $F(x)=x^n$ or exponential $F(x)=e^x$ form. 
When $K$ is a $1$-skeleton, this becomes the modern pairwise Hopfield network \citep{Krotov2016}. 

In the continuous modern Hopfield case, spins and patterns take real values $S_j, \xi_j^\mu \in \mathbb{R}$. Patterns are arranged in a matrix $\textbf{$\Xi$}=(\xi^1,...,\xi^P)$ and we define the \textit{log-sum-exp function} (lse) for $T^{-1} > 0$ as
\begin{equation}
    \text{lse}(T^{-1},\textbf{$\Xi$}^T S^{(t)},K) = T \text{ log} \left( \sum_{\mu=1}^{P} \sum_{\sigma \in K} \text{exp}(T^{-1} \Xi_\sigma^\mu S_\sigma^{(t)}) \right).
\end{equation}
The energy function is
\begin{equation}
    E = - \text{lse}(T^{-1},\textbf{$\Xi$}^T S^{(t)},K) + \dfrac{1}{2} S^{(t)T} S^{(t)}.
\end{equation}
For each simplex $\sigma \in K$, we denote the submatrix of the patterns stored on that simplex as $\textbf{$\Xi$}_{\sigma}$ (which has dimensions $P \times \sigma$). Using the dot product to measure the similarity between the patterns and spins, the update rule is
\begin{equation}\label{eq:modern-cont-update}
    S^{(t)} = \text{softmax} \left( T \sum_{\sigma \in K} \left(  \textbf{$\Xi$}_{\sigma}^T \overrightarrow{S_{\sigma}^{(t-1)}} \right) \right) \textbf{$\Xi$}.
\end{equation}

In practice, however, the dot product has been found to under-perform in modern continuous Hopfield networks compared to Euclidean or Manhattan distances \citep{Millidge2022}. Transformer models in natural language tasks have also seen performance improvements by replacing the dot product with cosine similarity \citep{henry-etal-2020-query}, again a measure with a more geometric flavour. However, these similarity measures generalise distances between pairs of elements rather than sets of elements.

We therefore use higher-dimensional geometric similarity measures, \textit{cumulative Euclidean distance (ced)} and \textit{Cayley--Menger distance (cmd)}. Let $d_\rho$ be the (Euclidean or Manhattan) distance between pattern $\xi_\rho^\mu$ and spins $S_\rho^{(t)}$ for pattern $\mu$ and spins $\rho \subset \sigma$. Let $K^\sigma_1$ be the subset of $K$ such that all elements in $K^\sigma_1$ are $1$--simplex faces of $\sigma$. We define the cumulative Euclidean distance as
\begin{equation}
    \text{ced}(\xi_\sigma^\mu,S_\sigma^{(t)}) = \sqrt{\sum_{\rho \in K^\sigma_1} (d_\rho)^2}.
\end{equation}
We define cmd$(\xi_\sigma^\mu,S_\sigma^{(t)})$ as the Cayley--Menger determinant of all $\rho \in K^\sigma_1$, with distances set as $d_\rho$.

\paragraph*{Mixed diluted networks.}\label{para:mixed-diluted} A computational concern in the above models is that the number of unique possible $k$--simplices is $N \choose k+1$, e.g., with $N=100$ there are approximately $9.89 \times 10^{28}$ possible $50$--simplices, compared to just $4,950$ edges ($1$--simplices) found in a pairwise Hopfield network. If we allow all possible simplices for a simplicial Hopfield network with $N$ neurons, the total number of simplices (excluding $0$--simplices, i.e., autapses) will be $\sum_{d=2}^{N} {N \choose d}$. Simultaneously, there is also an open question as to how many setwise connections is biologically-realistic to model. We also note that setwise connections can be functionally built from combinations of pairwise connections by introducing additional hidden neurons, as shown by \cite{krotov2021}. Therefore, we might in fact be under-estimating the total number of `functional' setwise connections, which may appear via common network motifs or `synapsembles' \citep{Buzsaki2010}.

Conservatively, we evaluate classes of simplicial Hopfield networks which are low-dimensional, i.e., $\text{dim}(K)$ is small, and where the total number of weighted simplices is not greater than those normally found in a pairwise Hopfield network, i.e., the number of non-zero weights is $N \choose 2$. We randomly choose weights to be non-zero, with each weight of a dimension having an equal probability and according to Table \ref{tab:conditions}. (See Appendix \ref{appendix:worked-examples} for a small worked example.) Such random networks have previously been studied in the traditional pairwise case as `diluted networks' \citep{Treves_1988,Bovier1993,Bovier1993b,Lowe2011}. Here we study \textit{mixed diluted networks}, since we use a mixture of connections of different degrees. We believe we are also the first to study such networks beyond pairwise connections, as well as in modern and continuous cases.

\paragraph*{Topology.} Different collections of  simplices in a simplicial complex can result in different Euler characteristics (a homotopy invariant property). Table \ref{tab:conditions} shows this from a parameter perspective via counting only the simplices with non-zero weights. However, even when using the same proportion of $1$-- and $2$--simplices, the choices of which vertices those simplices contain can be different due to randomness. Therefore, the topologies of each network may vary (and so too may their subsequent dynamics and performance). One well-studied and often important topological property in the context of simplicial complexes, homology, counts the number of \textit{holes} in each dimension. In the $0$th dimension, this is the number of connected components; in the $1$st dimension, this is the number of triangles formed by edges which don't also have a $2$--simplex `filling in' the interior surface of that triangle; in the $2$nd dimension, this is the number of tetrahedra formed by triangles which don't also have a $3$--simplex `filling in' the interior volume of that tetrahedron; and so on. The exact number of these holes in dimension $k$ can be calculated by the \textit{$k$--th Betti number}, $\beta_k$ (see Appendix \ref{appendix:homology}). We calculate these for our networks to observe the relationship between homology and memory capacity.

\subsection{Theoretical memory capacity}\label{subsec:theory}

\paragraph*{Mixed networks.}

Much is already known about the theoretical memory capacity of various Hopfield networks, including those with explicit \citep{Newman1988} or implicit \citep{Demircigil2017} setwise connectivity. However, we wish to point out a somewhat underappreciated relationship between memory capacity and the explicit or implicit number of network connections -- which, in the fully-connected network, is determined by the degree of the connections (see Appendix \ref{appendix:proofs} for proof).

\begin{corollary}[Memory capacity is proportional to the number of network connections]
If the connection weights in a Hopfield network are symmetric, then the order of the network's memory capacity is proportional to the number of its connections.
\label{corollary:proportional-memory}
\end{corollary}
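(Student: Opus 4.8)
The plan is to prove the statement through a signal-to-noise decomposition of the local field, the standard route to Hopfield capacity estimates, but carried out so as to expose explicitly that both the signal and the crosstalk scale with the number of connections \emph{incident to a single neuron}. Working in the traditional model of Eq.~\ref{eq:trad} (the modern and continuous cases follow by the same bookkeeping, using a Demircigil-type expansion of $F$), I would write the update as $S_i^{(t)} = \Theta(h_i)$ with local field $h_i = \sum_{\sigma \ni i} w(\sigma)\, S_{\sigma \setminus i}^{(t-1)}$, and test whether a stored pattern is a fixed point by setting $S_j = \xi_j^\nu$ for all $j$. Substituting $w(\sigma) = \frac{1}{N}\sum_\mu \xi_\sigma^\mu$ and using $(\xi_j^\nu)^2 = 1$, I would split the aligned field $h_i\,\xi_i^\nu$ into a \textbf{signal} term coming from $\mu=\nu$ and a \textbf{noise} (crosstalk) term coming from $\mu \neq \nu$.

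For the signal, every simplex $\sigma \ni i$ contributes exactly $1$ when $\mu = \nu$, since all squared pattern bits collapse to one, so the signal equals $\frac{1}{N} m_i$, where $m_i$ denotes the total number of weighted connections incident to neuron $i$. Crucially this count is blind to the degree of the simplices, so in a mixed network it is simply the aggregate incident connection count. For the noise, the $\mu \neq \nu$ contribution is a sum of $(P-1)\,m_i$ terms of magnitude $\frac{1}{N}$, each a product of independent mean-zero pattern bits. Because distinct simplices index distinct products of the random $\xi$'s, and contributions from different degrees involve distinct index sets, these terms are pairwise uncorrelated; their variances therefore add to $\frac{(P-1)m_i}{N^2}$, and a central-limit estimate renders the noise approximately Gaussian with standard deviation of order $\sqrt{P\,m_i}/N$.

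Comparing the two gives a signal-to-noise ratio of order $\sqrt{m_i/P}$, so a single bit is flipped with a probability controlled by this ratio; a union bound over the $N$ neurons and $P$ patterns then shows that every stored pattern is a stable fixed point (up to a vanishing error fraction) precisely when $P$ is of order $m_i$, possibly shifted by a logarithmic factor if one demands exact rather than almost-exact recall. This identifies the capacity order with $m_i$. Finally, since the weights depend only on the unordered set $\sigma$ they are automatically symmetric, and in any symmetric fully-connected or uniformly-random network the incident count is proportional to the total connection count $C$: for a $p$-uniform complex $\sum_i m_i = pC$, whence $m_i = pC/N$ and the capacity is of order $\binom{N-1}{p-1} \sim N^{p-1}$, recovering \cite{Newman1988} and \cite{Demircigil2017} as the special $p$-uniform cases.

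I expect the main obstacle to lie in the noise analysis rather than the signal. One must justify that the crosstalk terms are independent enough for variance-addition and a Gaussian tail bound to hold \emph{simultaneously} across all neurons and patterns, and in particular verify that mixing simplices of different degrees introduces no correlations that inflate the noise beyond order $\sqrt{P\,m_i}/N$; one must then convert the per-neuron error estimate into whole-pattern stability through a careful union bound. Controlling the constants that depend on the degree distribution, and confirming that the symmetry hypothesis is exactly what guarantees the coherent signal term (equivalently, the existence of the Lyapunov energy in Eq.~\ref{eq:trad}), are the remaining technical points.
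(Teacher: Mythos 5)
Your proposal is correct at the level of rigor standard in this literature, but it takes a genuinely different and far more self-contained route than the paper. The paper's proof of Corollary \ref{corollary:proportional-memory} performs no signal-to-noise analysis at all: it takes the known capacity $N^{d-1}$ of a degree-$d$ network from \cite{Newman1988} and \cite{Demircigil2017} as given, counts the distinct degree-$d$ connections as $N^d/d!$, and observes that the ratio of the two is the fixed degree-dependent quantity $d!/N$ --- the entire content of its displayed identity $\frac{N^{d-1}}{N^d/d!}\cdot\frac{N}{d!}=1$. You instead re-derive the capacity from first principles, and your derivation is sound: the decomposition of the aligned local field into a signal $m_i/N$ and a crosstalk term of standard deviation $\sqrt{(P-1)\,m_i}/N$ (with $m_i$ the number of simplices incident to neuron $i$) works because the pairwise uncorrelatedness you worry about does hold --- any two distinct incident simplices, or two distinct non-retrieved patterns on the same simplex, leave at least one unpaired mean-zero factor $\xi_j^\mu\xi_j^\nu$ in the product --- and the resulting ratio $\sqrt{m_i/P}$ plus a union bound gives $P=O(m_i)$ up to logarithms, with $m_i = pC/N$ in the $p$-uniform case. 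What your approach buys is a proof that actually establishes the cited capacity scalings rather than assuming them, and one that handles mixed degrees uniformly; in effect you have reproduced the paper's proof of Lemma \ref{lemma:mixed-network} (same signal/noise split, central-limit estimate, tail bound, and union bound over $N$ neurons) and attached it to the corollary. What the paper's route buys is brevity and a division of labour: the corollary is meant only to repackage known capacities as a statement about connection counts, with the analytic work deferred to the lemma. The one step worth tightening in your write-up is the passage from ``capacity is of order $m_i$'' to ``capacity is proportional to the total connection count'': this needs the incidence counts to be asymptotically uniform across neurons, which holds for the fully-connected skeletons the corollary concerns but is precisely what can fail in strongly diluted or irregular complexes, as the paper itself cautions in Section \ref{subsec:theory}.
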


What happens when there are connections between the same neurons at multiple degrees, i.e., what we call a mixed network? To the best of our knowledge, the theoretical memory capacity of such networks has not been well-studied. However, we found one classical study by \cite{Dreyfus1987} which showed, numerically, adding triplet connections to a pairwise model improved attractivity and memory capacity. Most prior formal studies have only considered connections at single higher degrees \citep{Newman1988,bengtsson1990higher}. Although, higher order neural networks have historically considered such mixtures of interactions on different degrees simultaneously \citep{Zhang2012}, but as regular neural networks (e.g., feed-forward networks), not Hopfield networks. Higher order Boltzmann machines (HOBMs) \citep{Sejnowski1986} have also been studied with mixed connections \citep{Amari1992,Leisink1999}\footnote{HOBMs also suffer the same problem as we face here, one of having many high-order parameters between the neurons to keep a track of. Possibly a factoring trick like in \cite{Memisevic2010} for HOBMs could be helpful in simplicial Hopfield networks.}. However, HOBMs are unlike Hopfield networks in that they typically have hidden units, are trained differently, and have stochastic neural activations \footnote{Despite this, there are equivalences \citep{Leonelli2021,Marullo2021,Smart2021}.}. Modern Hopfield networks also include an implicit mixture of connections of different degrees\footnote{Recall that $\left( \sum_i a_i \right)^b = \sum_i a_i^{b+1}.$} (but -- and see Theorem 1 of \cite{Demircigil2017}, which remains unproven -- the mixture is unbalanced and not particularly natural, especially for $F(x)=x^n$ when $n$ is odd). Therefore, we include the following result demonstrating fixed points, large basins of attraction (i.e., convergence) to those fixed points in mixed networks, and memory capacity which is linear in the number of fully-connected degrees of connections (a proof is provided in Appendix \ref{appendix:proofs}).

\begin{lemma}[Fully-connected mixed Hopfield networks]
A fully-connected mixed Hopfield network based on a $D$--skeleton with $N$ neurons and $P$ patterns has, when $N \rightarrow \infty$ and $P$ is finite: (i) fixed point attractors at the memory patterns; and (ii) dynamic convergence towards the fixed point attractors within a finite Hamming distance $\delta$. When $P \rightarrow \infty$ with $N \rightarrow \infty$, the network has capacity to store up to $(\sum_{d=1}^D N^d)/(2 \text{ ln } N)$ memory patterns (with small retrieval errors) or $(\sum_{d=1}^D N^d)/(4 \text{ ln } N)$ (without retrieval errors).
\label{lemma:mixed-network}
\end{lemma}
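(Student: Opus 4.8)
The plan is to carry out the classical signal-to-noise decomposition of the local field, generalised from pairwise couplings to the mixture of simplicial couplings of all dimensions $1 \le d \le D$ present in a $D$-skeleton. Fix a stored pattern $\xi^\nu$ and a neuron $i$ and substitute $S=\xi^\nu$ into the update rule (Eq.~\ref{eq:trad-update-rule}) with the weights of Eq.~\ref{eq:trad}. Writing $\xi^\mu_\sigma=\xi^\mu_i\,\xi^\mu_{\sigma\setminus i}$ and peeling off the diagonal term $\mu=\nu$, the field splits as $h_i=(\xi^\nu_i/N)\,M+\text{(noise)}$, where $M=\sum_{d=1}^{D}\binom{N-1}{d}$ counts the simplices of $K$ through $i$ (using $(\xi^\nu_{\sigma\setminus i})^2=1$). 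Thus the signal always points along $\xi^\nu_i$ with magnitude $M/N$.

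The next step is to compute the first two moments of the noise $\tfrac1N\sum_{\mu\neq\nu}\xi^\mu_i\sum_{\sigma\ni i}\xi^\mu_{\sigma\setminus i}\xi^\nu_{\sigma\setminus i}$. Its mean is zero by independence of the patterns. For the variance, the crucial observation — and what makes the \emph{mixed} network tractable — is that, setting $\eta_j=\xi^\mu_j\xi^\nu_j$, the inner sum becomes $\sum_{d=1}^{D} e_d(\eta)$, a sum of elementary symmetric monomials of distinct degrees; under the Rademacher measure $\E[\prod_{j\in\tau}\eta_j\,\prod_{k\in\tau'}\eta_k]=\mathbf{1}[\tau=\tau']$, so monomials of different degrees are orthogonal and the cross-degree covariances vanish. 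Hence the degree contributions to the variance simply add, giving $\Var(\text{noise})=(P-1)M/N^2$ and a signal-to-noise ratio $\mathrm{SNR}=\sqrt{M/(P-1)}$.

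With the SNR in hand both regimes follow. When $P$ is finite and $N\to\infty$, $M\to\infty$ forces $\mathrm{SNR}\to\infty$; a Chebyshev (or conditional sub-Gaussian) bound on the noise makes the per-bit error probability vanish, and a union bound over the $N$ neurons shows each $\xi^\nu$ is a fixed point with probability tending to one, giving (i). For (ii) I would evaluate the field at a state within Hamming distance $\delta$ of $\xi^\nu$: flipping the coordinates in a set $\mathcal F$ multiplies each simplicial term by $(-1)^{|\mathcal F\cap(\sigma\setminus i)|}$, which for finite $\delta$ perturbs the signal only by a factor $1-O(\delta D/N)\to 1$, so the field retains sign $\xi^\nu_i$ and every update moves the state toward $\xi^\nu$, establishing convergence within the claimed basin. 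When $P\to\infty$ with $N\to\infty$ I would follow the template of McEliece et al.\ \citep{McEliece1987}: approximating the noise as Gaussian and controlling the tail $\Phi(-\mathrm{SNR})\approx e^{-M/(2P)}$, a union bound over the $N$ bits of a single pattern (approximate recall) requires $P<M/(2\ln N)$, while the stricter union bound for simultaneous exact recall of all patterns tightens the constant to $4$, giving $P<M/(4\ln N)$; since $M=\sum_{d=1}^{D}\binom{N-1}{d}$ is of order $\sum_{d=1}^{D}N^d$, the stated capacities follow.

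The main obstacle I anticipate is the rigorous concentration step underlying the capacity thresholds: unlike the bounded pairwise couplings, the summands $e_d(\eta)$ are high-degree polynomials in the Rademacher variables whose tails must be controlled uniformly across $d\le D$ to justify both the Gaussian tail estimate and the union bound, and one must verify that the $d=D$ term genuinely dominates $M$ so that the lower-degree couplings do not degrade the sharp $\ln N$ constants. A secondary difficulty is a fully quantitative version of (ii): pinning down the basin beyond finite $\delta$ would require an energy-barrier argument in the style of \citet{Newman1988} to exclude spurious attractors lying between the perturbed state and $\xi^\nu$.
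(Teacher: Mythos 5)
Your proposal follows essentially the same route as the paper's proof: a signal-to-noise decomposition of the local field at a stored pattern, a central-limit/Gaussian-tail estimate on the crosstalk, the observation that flipping a finite number $\delta$ of spins perturbs the signal only at order $\delta/N$, and a union bound over neurons producing the $\ln N$ denominators in the two capacity regimes. The only notable differences are bookkeeping: you keep the model's global $1/N$ weight normalisation and explicitly justify that cross-degree covariances vanish (so the per-dimension noise variances add), whereas the paper renormalises each dimension by $1/|K_d|$ and combines per-dimension standard deviations, so your constants match the stated $(\sum_{d=1}^{D} N^d)/(2\ln N)$ only up to the factorials hidden in $\binom{N-1}{d}\approx N^d/d!$ --- a constant-factor looseness the paper's own derivation shares.
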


This naturally comports with Theorem 2 from \cite{Demircigil2017}, except here we show an increased capacity in the mixed network, courtesy of Corollary \ref{corollary:proportional-memory}.

\paragraph*{Mixed diluted networks.}

As mentioned earlier, full setwise connectivity is not necessarily tractable nor realistic. \cite{Lowe2011} show for pairwise diluted networks constructed as Erdös-Renyi graphs (constructed by including each possible edge on the vertex set with probability $p$) that the memory capacity is proportional to $pN$. Crucial for this result is that the random graph must be asymptotically connected. This makes sense, given that if any vertex was disconnected its dynamics could never be influenced. Empirically, it does seem that a certain threshold of mean connectivity in pairwise random networks is crucial for attractor dynamics \citep{Treves_1988}.

\begin{remark}\label{remark-random-hypergraph} By a straightforward generalisation of \cite{Lowe2011}'s result, diluted networks constructed as pure Erdös-Renyi hypergraphs may store on the order of $pN^{d-1}$ memory patterns, where $d$ is the degree of the connections.
\end{remark}

In the case of an unbounded number of allowable connections, Remark \ref{remark-random-hypergraph} would suggest picking as many higher-degree connections as possible when choosing between connections of lower or higher degrees in our mixed diluted networks. However, in the bounded case (our case), we are non-trivially changing the asymptotic behaviour in terms of connectivity and dynamics when we use a mixture of connection degrees. We also need to beware of asymmetries which may arise \citep{Kanter1988}. This makes the analysis of mixed diluted networks not particularly straightforward (also see Section \ref{sec:discussion}).

\subsection{Numerical simulations and performance metrics}\label{subsec:numerics} 

Given the large space of possible network settings, in the main text we focus primarily on conditions listed in Table \ref{tab:conditions}. Additional experiments are also shown in Appendix \ref{appendix:supplementary-figures}.

\begin{table}[h]
\centering
\caption{List of network condition keys (top row), their number of non-zero weights for 1-- and 2--simplices (second and third rows), and their `functional' Euler characteristic ($\chi$, bottom row). $N$ is the number of neurons. $C=(N-1)N$. For simulation, the number of simplices at each dimension are rounded to the nearest integer.}\label{tab:conditions}
\renewcommand{\arraystretch}{1.35}
\begin{tabular}{c|c|c|c|c|c|c|}
\cline{2-6}
                                   & K1    & R$\overline{1}2$ & R$\overline{12}$ & R1$\overline{2}$ & R2     \\ \hline
\multicolumn{1}{|c|}{1--simplices} & $N \choose 2$ & $0.75$$N \choose 2$ & $0.50$$N \choose 2$ & $0.25$$N \choose 2$ & 0      \\ \hline
\multicolumn{1}{|c|}{2--simplices} & 0     & $0.25$$N \choose 2$ & $0.50$$N \choose 2$ & $0.75$$N \choose 2$ & $N \choose 2$  \\ \hline
\multicolumn{1}{|c|}{$\chi$}       & $N-(1/2)C$     & $N-0.25C$ & $N$ & $N+0.25C$ & $N+(1/2)C$  \\ \hline
\end{tabular}
\end{table}

In our numerical simulations, we perform updates synchronously until $E$ is non-decreasing or until a maximum number of steps is reached, whichever comes first. When a simulation concludes we compute the \textit{overlap} (for binary patterns) or \textit{mean squared error (MSE)} (for continuous patterns) of the final spin configuration with respect to all stored patterns using
\begin{equation}
    m^\mu = \left| \dfrac{1}{N} \sum_{i=1}^N{S_i^{(t)} \xi_i^\mu} \right| \hskip0.15\textwidth \text{MSE}^\mu = \dfrac{1}{N} \sum_{i=1}^N{(S_i^{(t)} - \xi_i^\mu)^2}.
\end{equation}
We say the network recalls (or attempts to recall) whichever pattern has the largest overlap (where $m^\mu=1$ indicates perfect recall) or smallest MSE (where $\text{MSE}^\mu=0$ indicates perfect recall).

\section{Numerical simulations}\label{sec:results}

\subsection{Binary memory patterns}
After embedding random binary patterns, we started the network in random initial states and recorded the final overlap of the closest pattern. Table \ref{tab:trad-results} shows the final overlaps for traditional simplicial Hopfield networks ($N=100$). Our simplicial Hopfield networks significantly outperform the pairwise Hopfield networks (K1). In fact, the R1$\overline{2}$ model performs as well at $0.3N$ patterns as the the pairwise network performs on $0.05N$ patterns, a six-fold increase in the number of patterns and more than double the theoretical capacity of the pairwise network, $\sim 0.14N$ \citep{Amit1985}. Surprisingly, Table \ref{tab:trad-homology} shows homology accounts for very little of the variance in network performance.

\begin{table}[h]
\centering
\caption{Mean $\pm$ standard deviation of overlap distributions ($n=100$) from traditional simplicial Hopfield networks with varying numbers (top row) of random binary patterns. K1 is the traditional pairwise Hopfield network. R1$\overline{2}$ significantly outperforms K1 at all tested levels (one-way t-tests $p<10^{-11}$, $F>50.13$). At all pattern loadings, a one-way ANOVA showed significant variance between the networks ($p<10^{-20}$, $F>26.35$). Box and whisker plots shown in Figure \ref{fig:overlap-distributions-trad-extended}.}\label{tab:trad-results}
\renewcommand{\arraystretch}{1.35}
\begin{tabular}{|c|c|c|c|c|c|c|}
\hline
\textit{No. patterns}   & $0.05N$         & $0.1N$          & $0.15N$         & $0.2N$          & $0.3N$          \\ \hline
K1         & $0.87 \pm 0.18$ & $0.81 \pm 0.16$ & $0.66 \pm 0.10$ & $0.65 \pm 0.10$ & $0.59 \pm 0.08$ \\ \hline
R$\overline{1}2$ & $0.96 \pm 0.10$ & $0.94 \pm 0.14$ & $0.82 \pm 0.20$ & $0.71 \pm 0.17$ & $0.64 \pm 0.13$ \\ \hline
R$\overline{12}$ & $0.98 \pm 0.10$ & \boldmath$0.99 \pm 0.03$ & $0.97 \pm 0.10$ & $0.91 \pm 0.15$ & $0.76 \pm 0.16$ \\ \hline
\textbf{R1$\overline{\textbf{2}}$} & \boldmath$1 \pm 0$       & \boldmath$0.99 \pm 0.04$ & \boldmath$0.99 \pm 0.05$ & \boldmath$0.98 \pm 0.08$ & \boldmath$0.87 \pm 0.16$ \\ \hline
R2         & \boldmath$1 \pm 0$       & \boldmath$0.99 \pm 0.18$ & $0.94 \pm 0.18$ & $0.74 \pm 0.29$ & $0.53 \pm 0.23$ \\ \hline
\end{tabular}
\end{table}

\subsection{Continuous memory patterns}

\paragraph*{Energy landscape.} Using Equation \ref{eq:modern-energy} and given a set of patterns, a simplicial complex $K$, and an inverse temperature $T^{-1}$, we may calculate the energy of network states. To inspect changes in the energy landscapes of different network conditions, we set $N=10$ and $P=10$ random patterns. We performed principle component analysis (PCA) to create a low dimensional projection of the patterns. Then, we generated network states evenly spaced in a $10 \times 10$ grid which spanned the projected memory patterns in the first two dimensions of PCA space. We calculated each state's energy by transforming these points from PCA space back into the $N$--dimensional space, across the network conditions at $T^{-1}=1,2,10$ (Figure \ref{fig:energy-landscape-comparisons}). At $T^{-1}=1$, differences between the network conditions' energy landscapes are very subtle. However, at $T^{-1}=2$ and $T^{-1}=10$, we see a clear change: those with more $2$--simplices possess more sophisticated, pattern-separating landscapes.

\paragraph*{Recall as a function of memory loading.} We tested the performance of our simplicial Hopfield networks by embedding data from the MNIST \citep{lecun-mnisthandwrittendigit-2010}, CIFAR-10 \citep{Krizhevsky2009}, and Tiny ImageNet \citep{le2015tiny} datasets as memories. We followed the protocol of \cite{Millidge2022} to test recall under increasing memory load as an indication of the networks' memory capacities. To embed the memories, we normalise the pixel values between $0$ and $1$, and treat them as continuous-valued neurons, e.g., for MNIST we have $N=28 \times 28=784$ neurons. We initialise $S$ as one of the memory patterns corrupted by Gaussian noise with variance $0.5$. After allowing the network to settle in an energy minima, we measured the performance as the fraction of correctly recalled memories (over all tested memories) of the uncorrupted patterns, where `correct recall' was defined as a sum of the squared difference being $<50$. In all tests, we used $T^{-1}=100$. Also see Appendix \ref{appendix:further-sim-details} for further simulation details.

Figure \ref{fig:continuous-results} compares a pairwise architecture, K1, with a higher-order architecture, R1$\overline{2}$. The performance of the K1 networks is comparable to that shown in \cite{Millidge2022}, however, R1$\overline{2}$  significantly outperforms K1 across all datasets. Since the MNIST dataset is relatively simple and K1 already performs well, the performance improvement is small, albeit significant. In the CIFAR-10 and Tiny ImageNet datasets, the performance improvements are more noticeable, with most distance functions seeing improvements of $\geq10\%$ in the fraction of correctly retrieved memories.

\begin{figure}[h]
    \centering
    \includegraphics[width=\textwidth]{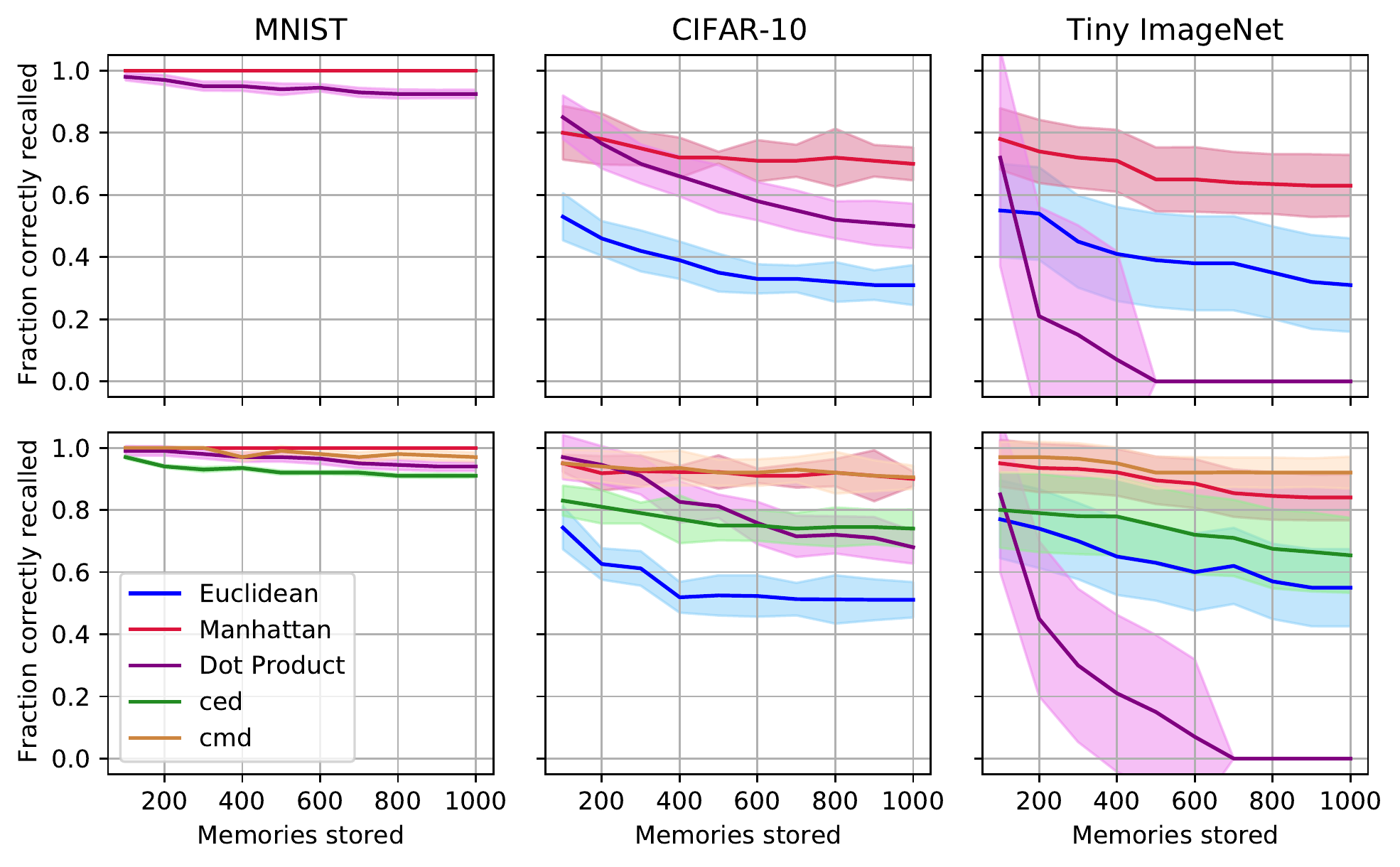}
    \caption{Recall (mean $\pm$ S.D. over $10$ trials) as a function of memory loading using the MNIST, CIFAR-10, and Tiny ImageNet datasets, using different distance functions (see legend). Here we compare the performance of modern continuous pairwise networks (top row) and modern continuous simplicial networks (bottom row). The simplicial networks are R1$\overline{2}$ networks (see Table \ref{tab:conditions} for information). R1$\overline{2}$ significantly outperforms the pairwise network (K1) at all tested levels where there was not already perfect recall (one-way t-tests $p<10^{-9}$, $F>16.01$). At all memory loadings, a one-way ANOVA showed significant variance between the networks ($p<10^{-5}$, $F>11.95$). Tabulated results are shown in Tables \ref{tab:mnist-tab}, \ref{tab:cifar-tab}, and \ref{tab:imagenet-tab}.}
    \label{fig:continuous-results}
\end{figure}

Also noticeable in the results for CIFAR-10 and Tiny ImageNet (see Figure \ref{fig:continuous-results}) is the relatively high performance of the ced and cmd distance measures. Indeed, cmd performs as well or better than the Manhattan distance in our experiments. And both ced and cmd (along with the Euclidean and Manhattan distances) outperform the dot product in CIFAR-10 and Tiny ImageNet at high memory loadings. This further supports the intuition and results of \cite{Millidge2022}, that more `geometric' distances perform better as similarity measures in modern Hopfield networks.

\section{Discussion}\label{sec:discussion}

We have introduced a new class of Hopfield networks which generalises and extends traditional, modern, and continuous Hopfield networks. Our main finding is that mixed diluted networks can improve performance in terms of memory recall, even when there is no increase in the number of parameters. This improvement therefore comes from the topology rather than additional information in the form of parameters. We also show how distance measures of a more `geometric flavour' can further improve performance in these networks. This simplicial framework (in diluted or undiluted forms) now opens up new avenues for researchers in neuroscience and machine learning. In neuroscience, we can now model how setwise connections, such as those provided by astrocytes and dendrites, may improve memory function and may interact to form important topological structures to guide memory dynamics. In machine learning, such topological structures may now be utilised in improved attention mechanisms or Transformer models, such as in \cite{Ramsauer2021}. At the intersection of these fields, we may now further study how the topology of networks in neuroscience and machine learning systems may correspond to one another and share functional characteristics, such as how the activity of `pairwise' Transformer models have shown similarities to activities in auditory cortex \citep{Millet2022}. Could `setwise' Transformer models correspond more closely? Or to a more diverse range of cell types? These and related questions are now open for exploration, and may lead to improved performance in applications \citep{Murfet2020}.

\paragraph*{Convolution operations and higher-order neural networks.} From the perspective of modern deep learning, considering higher order correlations between downstream inputs to a neuron is quite classical. For example, convolutional neural networks have incorporated specialised setwise operations since their inception \citep{Fukushima1980,Lecun1998}, and more general setwise relationships have also been introduced in higher-order neural networks \citep{Pineda1987,Reid1989,Ghosh1992,Zhang2012}. Although our setwise connections are not explicitly convolutional, they are in one notable sense conceptually similar: they collect information from a particular subset of neurons and only become active when those particular neurons are active in the right way. One of the main differences, however, is that -- unlike typical convolution operations -- we don't restrict the connection locations to some particular locations or arrangements within the input space. Our results therefore suggest that, in some cases, replacing regular feedforward connections with random convolutions may offer improved performance in some circumstances.

\paragraph*{Improvements and extensions.} 
Our study focusses on random choices of weighted simplices. What if we choose more carefully? Indeed, it seems quite likely biological setwise connections are not random, and are almost certainly not randomly chosen to replace random pairwise connections.

It now seems natural to study how online weight modulations (e.g., based on spectral theories) could generate new connections between Hopfield networks and, e.g., geometric deep learning. Such modulations may have novel biological interpretations, e.g., spatial and anti-Hebbian memory structures may be modelled by strategically inserting inhibitory interactions \citep{Haga2019,Burns2022} between higher simplices (and may also model disinhibition).

\paragraph*{Further analytic studies.} Our numerical results suggest diluted mixed networks have larger memory capacities than pairwise networks. In a fairly intuitive sense, this is not particularly surprising -- we are adding degrees of freedom to the energy landscape, within which we may build more stable and nicely-behaved attractors. However, we have not yet proven this increased capacity analytically for the diluted case, only given some theoretical indications as to why this occurs and proven the undiluted case. We hypothesise it is possible to do so using generalised methods from replica-symmetric theory \citep{Amit1985} or self-consistent signal-to-noise analysis \citep{ShiinoFukai93}, in combination with methods from structural Ramsey theory. The capacity for modern simplicial networks may be on the order of a double-exponential in the number of neurons (since, in the limit of $N \rightarrow \infty$, there is an exponential relationship in the number of multispin interactions on top of an exponential relationship in the number of intra-multispin interactions, i.e., both pair-spins and multi-spins can have higher degrees of attraction). This capacity, however, will likely scale nonlinearly with the choice of (random) dilution, e.g., there may be a steep drop in performance around a critical dilution range, likely where some important dynamical guarantees are lost due to an intolerably small number of connections of a particular order.

Even higher orders and diluted mixtures of setwise connections may also be studied. Such networks, per Lemma \ref{lemma:mixed-network}, will likely improve their performance as higher-degree connections are added (as shown in Appendix \ref{appendix:supplementary-figures}). However, and as implied in Section \ref{subsec:theory}, the number and distribution of these connections may need to be careful chosen in highly diluted settings.

\newpage
\subsubsection*{Reproducibility Statement}
To reproduce our results in the main text and appendices, we provide our Python code as supplementary material at \url{https://github.com/tfburns/simplicial-hopfield-networks}. We have also provided a small worked example in Appendix \ref{appendix:worked-examples} to help clarify computational steps in the model construction. Assumptions made in our theoretical results are stated in Section \ref{subsec:theory} and Appendix \ref{appendix:proofs}.

\subsubsection*{Acknowledgements}
The first author thanks Milena Menezes Carvalho for graphic design assistance with Figure \ref{fig:illustration-and-weight-distributions}, as well as Robert Tang, Tom George, and members of the Neural Coding and Brain Computing Unit at OIST for helpful discussions. We thank anonymous reviewers for their feedback and suggestions. The second author acknowledges support from KAKENHI grants JP19H04994 and JP18H05213.

\bibliographystyle{iclr2022_conference}
\bibliography{refs}

\newpage
\appendix
\section{Appendix}

\subsection{Purpose of studying and extending Hopfield networks}\label{appendix:purpose}

In the context of memory more broadly, associative memory is a form of \textit{long-term memory}, memory which can potentially last a lifetime. However, memory also exists on at least two shorter and functionally distinct time-scales: \textit{short-term memory} and \textit{sensory memory}. Short-term or working memory stores information for on the order of tens of seconds, and performs as a limited, passive temporary memory reservoir with which to manipulate and use information \citep{Milner1955, Mongillo2008}. Sensory memory operates on even shorter timescales than short-term memory, typically on the order of only seconds. It is where visual \citep{Sperling1963, Vogel2001}, auditory \citep{Darwin1972, Winkler2005}, and other sensory information \citep{Gordon1993, Lederman2009} is first actively `remembered' -- all other information is either immediate sensory information or recalled information. A general schema for forming a long-term memory is therefore to: (1) receive external sensory information; (2) store the features of that sensory information in sensory memory; (3) manage and store one or more features of the sensory information (and/or combine it with other sensory information) in short-term memory; and then (4) consolidate this information into long-term memory. Understanding these processes from a theoretical and computational perspective has several real-world implications, including: (i) it may allow us to create more intelligent machines, by gaining inspiration and insight from biological strategies to store, retrieve, and use long-term associative memories; and (ii) it may help us understand the neurobiological mechanisms (and their implications) for biological memory systems, helping to not only understand related psychological and biological phenomena, but to potentially help identify therapeutic targets for related dysfunction.

Psychological abilities attributed to associative memory in humans and non-human animals are typically said to be any form of long-term memorisation which involves `pairing' or `associating' distinct stimuli such that when presented with one stimuli, the subject can recall the other stimuli. Classical examples of this type of associative memory include pairings: name-face pairs \citep{Sperling2003}, object-sound pairs \citep{Preziosi2017}, and object-place pairs \citep{Gilbert2004}. These types of associative memories are part of \textit{explicit} or \textit{declarative memory} \citep{Ullman2004}, i.e., long-term memory that can be explicitly or voluntarily stated or declared. This is in contrast to associative memories which are part of \textit{implicit} or \textit{non-declarative memory}, i.e., long-term memory recalled or used unconsciously or unintentionally. Examples of this type of associative memory are generated by classical conditioning \citep{Maren2001, Christian2003} and operant conditioning \citep{Mackintosh1983,McSweeney2014}.

Computational accounts of implicit associative memory have a long and successful history starting from examples like the Rescorla–Wagner model \citep{Rescorla1972}. Today, this research has grown into the computational field of reinforcement learning \citep{Daw2006}. In comparison to implicit associative memory, explicit associative memory seems more computationally sophisticated, as suggested by its complex biological bases \citep{Chaudhuri2016,Clopath2017,Mau2020}.

Soon after the proposal of \cite{Marr1971}, one of the first and most influential computational models of (explicit) associative memory was the Hopfield model \citep{Hopfield1982}, which we study and extend here. A nice feature of this model is that in the basic case of embedding a single memory, it is easy to see there exists a choice of threshold for every neuron whereby any partial activation of the memory will lead to activation of all its other members. This therefore generates an attractor dynamic around the fixed point of the stored memory, which is comparable to the neuronal assembly attractor dynamics seen in the hippocampus \citep{Wills2005,Pfeiffer2015,Rebola2017}.

Hippocampus is probably the foremost brain structure involved in memory. It, together with the surrounding entorhinal, perirhinal, and parahippocampal cortices, is especially important for explicit memory \citep{Scoville1957,Milner1966,Squire1992}. It is a necessary structure for the initial formation and learning of explicit memory, acting as a short-term memory for later long-term consolidation, thought to occur in cortex \citep{Squire1989,Sutherland1989}. Classically, we think these capacities are mainly achieved via Hebbian learning and long-term potentiation \citep{Bliss1973,Gustafsson1988}. However, there is now an increasing literature which shows how other mechanisms may help to achieve these memory functions (see Appendix \ref{appendix:biology} for examples).

As a complete computational account of long-term memory storage, the classical Hopfield model encounters challenges. As discussed in Section \ref{sec:intro} of the main text, the memory capacity of the classical Hopfield network is linear in the number of neurons $N$, specifically: approximately $0.14 N$ patterns may be stored before spurious attractors overwhelm workable levels of memory recall \citep{Amit1985,McEliece1987,Bruck1990}, and this capacity diminishes further when the patterns are statistically or spatially correlated \citep{Lowe1998}, in sparse connectivity regimes \citep{Treves_1988,Lowe2011}, and in combination \citep{Burns2022}. Biological networks typically have very sparse connectivity \citep{Minai1993,Lansner2009,Barth2012} and everyday memory items typical share many statistical features, may have sophisticated inter-relations, and are spatially or semantically correlated in non-trivial structures \citep{Constantinescu2016,Aronov2017,Bellmund2018,Bao2019,Park2021,Griesbauer2022}. Despite this, humans can remember very high-fidelity information of thousands of statistically similar images \citep{Standing1973,Brady2008} and human faces \citep{Jenkins2018}, tens of thousands of linguistic items \citep{Brysbaert2016}, and more than 100,000 digits of the number $\pi$ \citep{Bellos2015} -- all, seemingly, without dramatically sacrificing or over-writing other memories.

Although modern Hopfield networks have substantially increased theoretical memory capacity \citep{Krotov2016,Demircigil2017}, the combined biological and psychological evidence mentioned above, along with the finite (if large) number of brain cells \citep{Herculano-Houzel2009} and energetic demands of maintaining them and their inter-connections \citep{Bordone2019}, suggest there may be more to the neurophysiological and computational story. Furthermore, even if we find that the theoretical memory capacity should still be high according to a Hopfield interpretation, capacity can be considered as a measure of undesired interferences between memories, and thus may be maximised for cognitive convenience or speed and accuracy of memory recall.

Nevertheless, problems and criticisms don't detract from the usefulness and importance of the Hopfield model or its modern variations in the study of memory systems \citep{Sathasivam2008,Rizzuto2001,Weber2017}, usefulness in machine learning applications \citep{Widrich2020,Seidl2022}, contribution to more general machine learning \citep{Sharma2022,Hoover2022}, or connection between biology and machine learning \citep{Chaudhuri2019,Tyulmankov2021,Kozachkov2022}. There are even more opportunities to build upon this substantial foundation to create more sophisticated computational models of associative memory. Notably, much work has improved the efficiency and capacity of the Hopfield network \citep{Storkey1997,Hopfield2008,Krotov2016,Gripon2011,Mofrad2017}. Other work has focused on achieving sparse representations \citep{Kim2017,Hoffmann2019} or including other forms of biological realism \citep{Watson2011,Watson2011a,Woodward2015,Burns2022}. The current work contributes to developments in sparsity, biological realism, and memory capacity.

\subsection{Setwise connections and modulations are bountiful in biology}\label{appendix:biology}
Setwise connections are not limited to the case, as one might expect, of multiple synaptic contacts between pairs or sets of cells \citep{jones1969,Sorra1993,Geinisman2001,Lee2013,Rigby2022}, which may result in multiplicative interactions \citep{poleg2016nmda,Reuveni2017,Groschner2022} or form of functional synaptic clusters \citep{Kavalali1999,Bloss2018,Pulikkottil2021,Hedrick2022}. Other examples (some of which are illustrated in Figure \ref{fig:biology-setwise}) include the wide spatial dispersion of certain neurotransmitters \citep{Rusakov1998,ArbuthnottWickens2007,Kato2022}, dendro-dendritic synapses \citep{Pinault1997, Didier2001, Brombas2017}, persistently-connected neuronal assembly structures or `synapsembles' \citep{Buzsaki2010,Papadimitriou2020}, distributed persistent activity during activities such as motor planning and working memory \citep{Guo2017,Hart2020}, neuroglial modulations of neurotransmitter release probabilities across multiple neurons or synapses \citep{Rogier2012,Covelo2018,Chipman2021}, `tripartite' astrocyte--neuron synapses \citep{araque1999tripartite,perea2009tripartite}, astrocytic coding \citep{Doron2022}, and during dendritic integration at the level of individual neurons \citep{Golding2002,Etherington2010}. Modulations of and interactions between such connections are illustrated in Figure \ref{fig:biology-setwise-modulation}.

It is also possible to `functionally' construct setwise connections through only pairwise synapses, as shown in \cite{krotov2021}. In one sense, this kind of pairwise-based reconstruction of setwise connections could also be thought of as similar to results from \cite{NeuronsAsDNNs}, who showed how multi-layer artificial neural networks can approximate more biologically-sophisticated model neurons with dendrites. Indeed, many of the known and suggested computational features of dendritic integration \citep{PP2020review,CP2021view} may be considered as highly specialised and sophisticated forms of convolutions or setwise interactions.

\begin{figure}[h]
    \centering
    \includegraphics[width=0.6\textwidth]{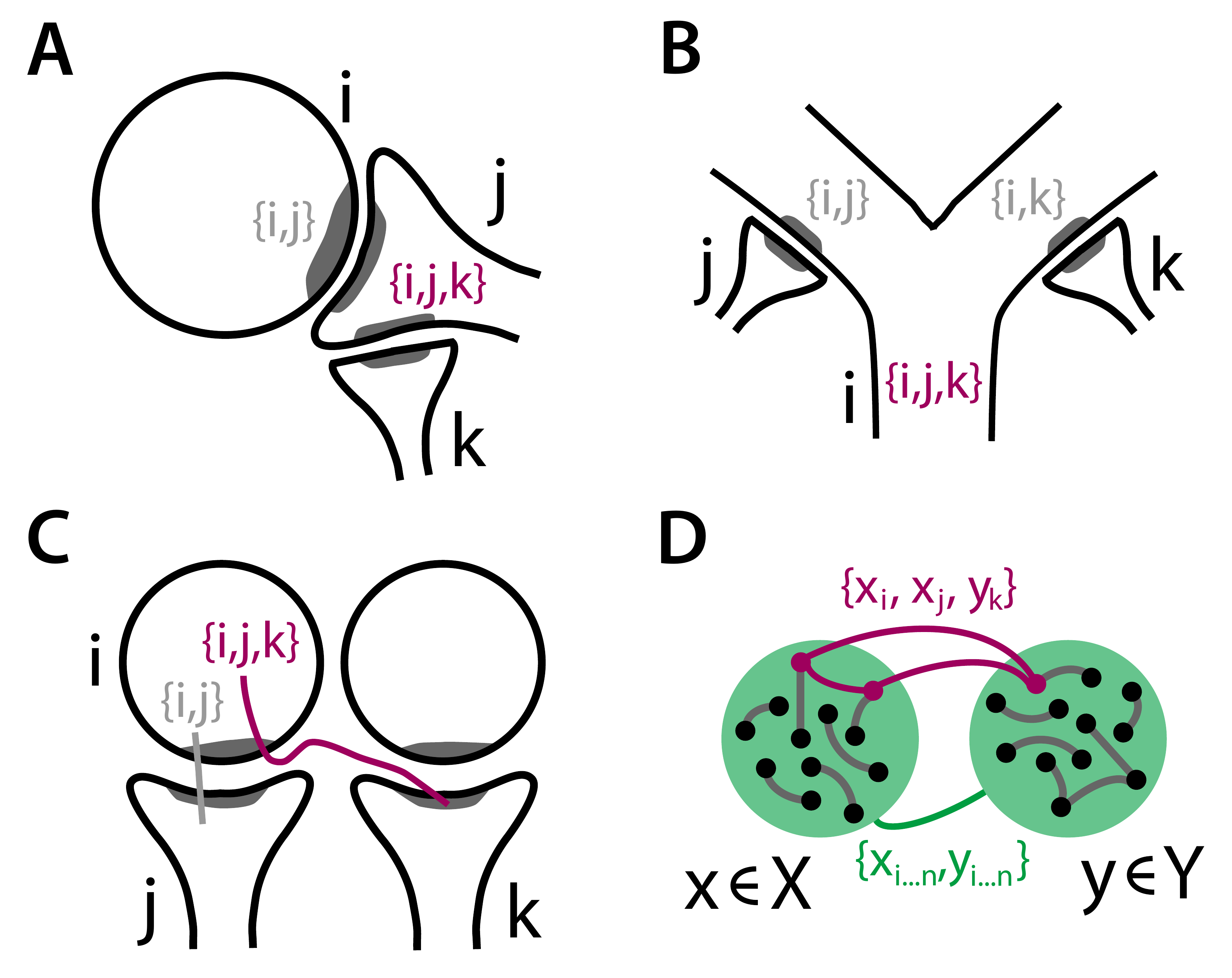}
    \caption{\textbf{A.} Multi-synaptic bouton. \textbf{B.} Dendritic integration. \textbf{C.} Extra-synaptic neurotransmitter diffusion. \textbf{D.} Functional connections between neural assemblies.}
    \label{fig:biology-setwise}
\end{figure}

\begin{figure}[h]
    \centering
    \includegraphics[width=0.6\textwidth]{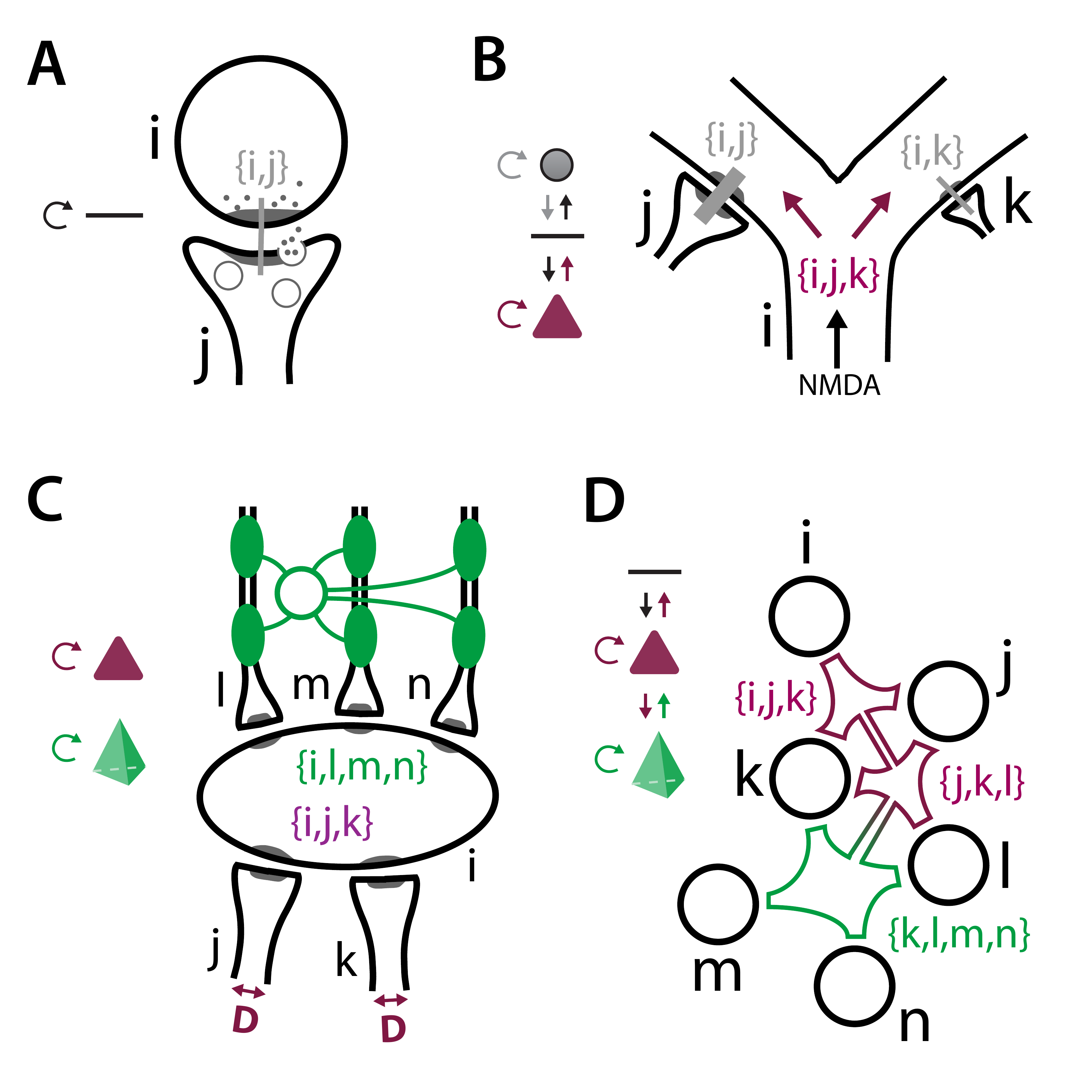}
    \caption{\textbf{A.} Neurotransmitter depletion at a synapse. \textbf{B.} NMDA-spikes and unequal synaptic strengths in dendritic integration. \textbf{C.} Transmission speed plasticity using myelination (top) and axon diameter (bottom) to affect `temporal'/functional setwise influence in a post-synaptic cell. \textbf{D.} Astrocytic messaging, between both neurons and astrocytes.}
    \label{fig:biology-setwise-modulation}
\end{figure}

\subsection{Options for modelling setwise connectivity in neural networks and why we choose simplicial complexes}\label{appendix:simplicial-complexes}

In geometric and topological artificial intelligence and machine learning, recent advances have been realised by utilising higher dimensional analogues of graphs such as simplicial complexes \citep{Ebli2020,Roddenberry2021}, cube complexes \citep{BurnsTang2022}, cell complexes \citep{Hajij2020,Bodnar2021}, and hypergraphs \citep{Feng2019,Xu2022}. Unlike graphs, these structures can naturally represent higher-degree, setwise relationships. However, not all structures are appropriate for all systems \citep{spivak2009higher,Rosas2022}.

Why do we choose to model our collections of setwise connections as weighted simplicial complexes and not use general cell complexes or hypergraphs? There are three main reasons:
\begin{enumerate}
    \item \textit{Simplicial complexes allow all possible setwise relations to exist.} Simplices, by construction, may span any number of vertices. This means any possible combination of neurons may share a common, setwise weight. This is also possible in hypergraphs, but not possible in all complexes, e.g., cube complexes may not include triangles. The `natural' complex for modelling all possible setwise relationships is therefore a simplicial one. Since we also wish for our setwise weights to be symmetrical (i.e., have the same value when updating the spin with respect to each constituent neuron), it is unnecessary to include any more than one unique object per setwise relationship. This also makes the choice of a simplex suitable, since there can only be one unique simplex for a given set of vertices -- which is also the case for edges in undirected hypergraphs (but we find these are less suitable).
    \item \textit{The sub-edge problem of hypergraphs makes them less suitable.} Hypergraphs are graphs where edges may contain any number of unique vertices from the vertex set. In a sense, these are a more general structure than simplicial complexes and lack downward closure, e.g., if the edge $\{1,2,3\}$ exists, edges such as $\{2,3\}$ or $\{1\}$ do not necessarily exist, whereas if $\{1,2,3\}$ was a simplex in a simplicial complex, simplices $\{2,3\}$ and $\{1\}$ exist. However, hypergraphs do not have well-defined `sub-edges' (described as the `sub-edge problem' in Remark 3.5 of \cite{spivak2009higher}). This has the consequence of defining interactions between `levels' of hyperedges (setwise relationships) in hypergraphs slightly awkward. In contrast, simplicial complexes have a well-defined hierarchy of setwise relationships, partly due to the downward closure condition.
    \item \textit{Downward closure of setwise connections is biologically plausible.} Another benefit of downward closure in simplicial complexes is that it currently seems better supported from the perspective of biological plausibility (also see Appendix \ref{appendix:biology}). For example, although it can happen that a setwise connection (anatomical or functional) between neurons could exist without any underlying pairwise connections, the typical machinery used to create such setwise connections is sufficiently local to assume that, because of the functional local modularity of connections in the brain \citep{Kaiser2006,Chen2013,Muller2020,ErcseyRavasz2013}, there is a high probability of these neurons having a pairwise connection simply due to proximity.
\end{enumerate}

As an additional practical benefit, simplicial complexes are currently more well-studied than structures such as hypergraphs (at least in some areas, e.g., spectral theories or (co)homology, which are of natural interest here), meaning that we can also take advantage of the relative maturity of the field in those areas -- admittedly, we use very few advanced methods or properties in an essential way in this study, although we hope to do so in future studies, having now introduced an initial interpretation of simplicial Hopfield networks and begun exploring some of their potential benefits. However, it will also be interesting to see what differences can be found between hypergraphic and simplicial Hopfield networks, and perhaps which provides a closer approximation to biology or which shows improved performance on certain tasks.

Among other possibilities, the weight of a simplex $w(\sigma)$ could be modulated by the local energy of its spins $S_\sigma$, its coface's spins, or of those of simplices in the same dimension as $\sigma$ which are `neighbouring' (in the Hodge Laplacian sense \citep{lim2020hodge,schaub2020random}). These interactions could take many different mathematical forms \citep{Petri2018,Ebli2020,Roddenberry2021,Rosas2022,Santoro2022}. Neurobiologically, these interactions could represent neural--glia interactions, glia--glia interactions, nonlinear dendritic integration (especially dendritic spikes and shunting), neurotransmitter--neuromodulator interactions, or hierarchical assembly operations and dynamics, to name a few (see Appendix \ref{appendix:biology} for illustrations and further information).

The downward closure of simplicial complexes could be seen as a disadvantage. For example, when including simplices of high dimension, we are also forced to limit our choices of simplices if we wish to maintain the simplicial structure. Again, whenever a $k$--simplex exists in $K$, all its faces must also exist, e.g., if a triangle exists, so must its surrounding edges and their surrounding vertices. If any constituent simplex is missing, the structure of the simplicial complex is broken and in our case would become an undirected, weighted hypergraph. Instead, in our simulations, we prefer to interpret `missing simplices' as merely functionally `missing-in-action' by setting their weights to zero if we do not wish to include them in the model. This has the consequence of having no mathematical effect on our update rules while retaining the convenience of a simplicial structure.

\subsection{A small worked example}\label{appendix:worked-examples}

Consider a small example of just $P=3$ memory patterns embedded in a simplicial Hopfield network on $N=6$ neurons. First, let's consider a 3--skeleton, i.e., a network without any `dilution' or `missing weights' up to $D=3$. Such a network will have functional connections totalling
\begin{equation}
    \sum_{d=2}^{D+1} {N \choose d} = {6 \choose 2} + {6 \choose 3} + {6 \choose 4} = 15 + 20 + 15 = 50.
\end{equation}
We typically do not include functional self-connections (autapses) -- although Hopfield networks with such networks have been studied \citep{Folli2017,rocchi2017high,Gosti2019}. In simplicial Hopfield networks, such self-connections correspond to 0--simplices, i.e., vertices. While these vertices do exist in the underlying simplicial complex $K$, we set their associated weights to 0.

Given $N=6$, the $3$--skeleton in this example is
\begin{equation}
\begin{split}
    \textcolor{gray}{K_0 =} & \textcolor{gray}{\{ \{1\},\{2\},\{3\},\{4\},\{5\},\{6\} \} }, \\
    K_1 = &  \{ \{1,2\}, \{1,3\},\{1,4\},\{1,5\},\{1,6\},\{2,3\},\{2,4\},\{2,5\},\{2,6\},\{3,4\},\{3,5\}, \\
    & \{3,6\},\{4,5\},\{4,6\},\{5,6\} \}, \\
    \color{purple}{K_2 =} & \textcolor{purple}{ \{ \{1,2,3\},\{1,2,4\},\{1,2,5\},\{1,2,6\},\{1,3,4\},\{1,3,5\},\{1,3,6\},\{1,4,5\},\{1,4,6\}, } \\
    & \textcolor{purple}{ \{1,5,6\},\{2,3,4\},\{2,3,5\},\{2,3,6\},\{2,4,5\},\{2,4,6\},\{2,5,6\},\{3,4,5\},\{3,4,6\}, } \\
    & \textcolor{purple}{ \{3,5,6\},\{4,5,6\} \} }, \\
    \textcolor{teal}{K_3 =} & \textcolor{teal}{ \{ \{1,2,3,4\},\{1,2,3,5\},\{1,2,3,6\},\{1,2,4,5\},\{1,2,4,6\},\{1,2,5,6\},\{1,3,4,5\}, } \\
    & \textcolor{teal}{ \{1,3,4,6\},\{1,3,5,6\},\{1,4,5,6\},\{2,3,4,5\},\{2,3,4,6\},\{2,3,5,6\},\{2,4,5,6\}, } \\
    & \textcolor{teal}{ \{3,4,5,6\} \} }, \\
    K = & \emptyset\ \cup \textcolor{gray}{K_0} \cup K_1 \cup \textcolor{purple}{K_2} \cup \textcolor{teal}{K_3}.
\end{split}
\end{equation}

Set three patterns as
\begin{equation}
\begin{split}
    \xi^1 = & ( -1,+1,-1,+1,-1,+1 ), \\
    \xi^2 = & ( +1,-1,+1,-1,-1,+1 ), \\
    \xi^3 = & ( -1,-1,-1,+1,+1,+1 ).
\end{split}
\end{equation}

For all $\sigma \in K_0$, we set $w(\sigma)=0$. For all higher dimensions, we set
\begin{equation}
    w(\sigma) = \frac{1}{N} \sum_{\mu =1}^{P} \xi_\sigma^\mu.
\end{equation}
For example,
\begin{equation}
\begin{split}
    w(\{1,3\}) = & 1/6 \cdot \left( (-1 \cdot +1) + (+1 \cdot +1) + (-1 \cdot -1) \right) = 1/6, \\
    w(\textcolor{purple}{\{3,5,6\}}) = & 1/6 \cdot \left( (-1 \cdot -1 \cdot +1) + (+1 \cdot -1 \cdot +1) + (-1 \cdot +1 \cdot +1) \right) = - 1/6, \\
    w(\textcolor{teal}{\{2,4,5,6\}}) = & 1/6 \cdot \left( (+1 \cdot +1 \cdot -1 \cdot +1) + (-1 \cdot -1 \cdot -1 \cdot +1) + (-1 \cdot +1 \cdot +1 \cdot +1) \right) \\
    = & - 1/2.
\end{split}
\end{equation}

Given a set of spins $S^{(t)}$ at a time-step $t$, the network will evolve according to Equation \ref{eq:trad-update-rule}, minimising the energy shown in Equation \ref{eq:trad}. The energy function consists of a sum of products of the weights with the product of their respective spins, e.g., if $S^{(t)}=( +1,+1,-1,+1,-1,-1 )$,
\begin{equation}
\begin{split}
    E & = - \left( \dots + w(\{1,3\}) \cdot S_{\{1,3\}}^{(t)} + \dots + \textcolor{purple}{w(\{3,5,6\}) \cdot S_{\{3,5,6\}}^{(t)}} + \dots + \textcolor{teal}{w(\{2,4,5,6\}) \cdot S_{\{2,4,5,6\}}^{(t)}} + \dots \right) \\
    & = - \left( \dots + 1/6 \cdot -1 + \dots \textcolor{purple}{- 1/6 \cdot -1} + \dots \textcolor{teal}{+ 1/2 \cdot 1} + \dots \right).
\end{split}
\end{equation}

In this 3--skeleton case, then, the network's energy function can also be written as
\begin{equation}
\begin{split}
    E = - \biggl[ & \dfrac{1}{2} \sum_{i,j} \biggl( \frac{1}{N} \sum_{\mu =1}^{P} \xi_i^\mu \xi_j^\mu \biggl) S_i^{(t)} S_j^{(t)} \\
    & + \textcolor{purple}{\dfrac{1}{3} \sum_{i,j,k} \biggl( \frac{1}{N} \sum_{\mu =1}^{P} \xi_i^\mu \xi_j^\mu \xi_k^\mu \biggl) S_i^{(t)} S_j^{(t)} S_k^{(t)}} \\
    & + \textcolor{teal}{\dfrac{1}{4} \sum_{i,j,k,l} \biggl( \frac{1}{N} \sum_{\mu =1}^{P} \xi_i^\mu \xi_j^\mu \xi_k^\mu \xi_l^\mu \biggl) S_i^{(t)} S_j^{(t)} S_k^{(t)} S_l^{(t)}} \biggr].
\end{split}
\end{equation}
Essentially, the energy function is similar to a sum of the energy functions of \cite{Krotov2016} with all possible $k$--neuron connections, but where the weights of those connections are independent of each other for each level of interaction, making each connection and each level more controllable. The memory capacity of this type of simiplicial Hopfield network is discussed in Section \ref{subsec:theory} and Appendix \ref{appendix:proofs}. However, one of our main contributions is the findings related to the `diluted' case, i.e., where more than just the 0--simplices have their weights set to 0. Indeed, these are the cases we mainly evaluate in Section \ref{subsec:numerics}.

Following on with the same example as above, we can create a diluted simplicial Hopfield network based on $K$. For example, if we chose to limit ourselves to ${6 \choose 2}=15$ parameters, we could choose to apportion one third of these parameters to each dimension, i.e., set weights only for a subset $K' \subset K$, e.g.,
\begin{equation}
\begin{split}
    K_1' = & \{ \{1,2\}, \{1,6\},\{2,3\},\{2,4\},\{5,6\} \}, \\
   \textcolor{purple}{ K_2' =} & \textcolor{purple}{\{ \{1,2,3\},\{1,2,6\},\{1,3,4\},\{3,4,5\},\{3,4,6\} \}}, \\
    \textcolor{teal}{K_3' =} & \textcolor{teal}{\{ \{1,3,5,6\},\{1,4,5,6\},\{2,3,4,5\},\{2,3,4,6\},\{2,3,5,6\} \}},\\
    K' = & K_1' \cup \textcolor{purple}{K_2'} \cup \textcolor{teal}{K_3'}.
\end{split}
\end{equation}

In our numerical simulations, the choice of which connections to keep is entirely random. By analogy, we can think of this dilution procedure as a naïve solution to the following (fairly contrived) communications problem: Imagine we are tasked with increasing the speed at which a deliberative body of people, e.g., a very large committee, comes to its decisions. Currently, each committee member has individual channels of communication with every other member. This is good for high-fidelity, accurate, and nuanced conversations between members, but not so good for efficiency or speed of decision-making. For example, if a certain block of members consistently vote similarly, it would perhaps be quicker for those members to communicate as a group to check what their majority opinion is rather than all members individually communicating with every other member one at a time. Conversely, when two members consistently vote differently or are active members within distinct voting blocks (and especially if their votes are often tie-breakers), perhaps those two members ought to regularly discuss matters privately and in detail. Our naïve solution is to randomly replace some individual channels of communication with small group communication channels. Possibly by performing a survey of members or observing patterns in their voting or communications, we could come up with a better strategy. However, in this analogy, it appears the naïve solution works reasonably well (see results in Section \ref{sec:results} of the main text). We think a deserving next step will be determine better strategies, perhaps based on or accounting for the overall memory structure and correlations between memory items.

\subsection{Simplicial homology}\label{appendix:homology}

Simplicial homology allows us to precisely count the number of `holes' in each dimension of a simplicial complex by calculating the $k$--dimensional Betti number, $\beta_k$. A related topological property which is particularly useful when studying low-dimensional objects (e.g., classification of surfaces) is the Euler characteristic, which for a simplicial complex can be calculated by $\chi (K)= \sum_{i=0}^{\infty} (-1)^k |K_k|$, i.e., it is an alternating sum which `balances' out the number of holes in odd and even dimensions. It is related to the Betti numbers insofar as the Euler characteristic is also given by $\chi (K)= \sum_{i=0}^{\infty} (-1)^k \beta_k$. Note, however, that $|K_k| \neq \beta_k$. As such, although the Euler characteristic can be used for comparing the topologies of two simplicial complexes, it is not as informative (with respect to holes) as homology (although the latter is more costly to compute, as we will now see).

\paragraph{\boldmath${k}$-chains and boundaries.} The group of $k$-chains is a free Abelian group with the basis of $K_k$,
$$C_{k}=C_{k}(K):=\mathbb{Z} K_k:=\left\{\sum_{\sigma \in K_k} \alpha_{\sigma} \sigma \mid \alpha_{\sigma} \in \mathbb{Z}\right\}.$$
The boundary (difference between the `end points') of a face $\sigma$ in dimension $k$ is
$$\partial_{k}(\sigma):=\sum_{j \in \sigma} \operatorname{sign}(j, \sigma)(\sigma \backslash j).$$
where $\operatorname{sign}(j, \sigma)=(-1)^{i-1}$, where $j$ is the $i$-th element of $\sigma$ (ordered) and $\sigma \backslash j:=\sigma \backslash\{j\}$.

\begin{example*}
Consider the simplicial complex
\begin{equation}\label{eq:SC-example-1}
    K=\{\{1,2,3\},\{1,2\},\{1,3\},\{2,3\},\{1\},\{2\},\{3\}, \emptyset\}.
\end{equation}

For $k=2$, we have $K_{2}=\{\{1,2,3\}\}$ and $C_{2}=\{\{1 \cdot \{1,2,3\}\}\}$. Let us calculate the boundary of $\sigma=\{1,2,3\}$. First, we calculate the respective sign functions:
\begin{align*}
&\operatorname{sign}(1, \sigma)=(-1)^{i-1}=(-1)^{1-1}=(-1)^{0}=1 \\
&\operatorname{sign}(2, \sigma)=(-1)^{i-1}=(-1)^{2-1}=(-1)^{1}=-1 \\ 
&\operatorname{sign}(3, \sigma)=(-1)^{i-1}=(-1)^{3-1}=(-1)^{2}=1.
\end{align*}

Using these values, we may calculate the boundary by
\begin{align*}
    \partial_{2} (\sigma)&=\operatorname{sign}(1, \sigma)(\sigma \backslash 1)&&+\operatorname{sign}(2, \sigma)(\sigma \backslash 2)&&+\operatorname{sign}(3, \sigma)(\sigma \backslash 3) \\
    &=(1)(\sigma \backslash 1)&&+(-1)(\sigma \backslash 2)&&+(1)(\sigma \backslash 3) \\
    &=(1)(\{2,3\})&&+(-1)(\{1,3\})&&+(1)(\{1,2\}) \\
    &=\{2,3\} &&-\{1,3\} &&+\{1,2\}.
\end{align*}

The boundary of $\{1,2,3\}$ is $\{2,3\}-\{1,3\}+\{1,2\}$. Notice, this is a cycle.
\begin{figure}[h]
    \centering
    \includegraphics[width=0.35\textwidth]{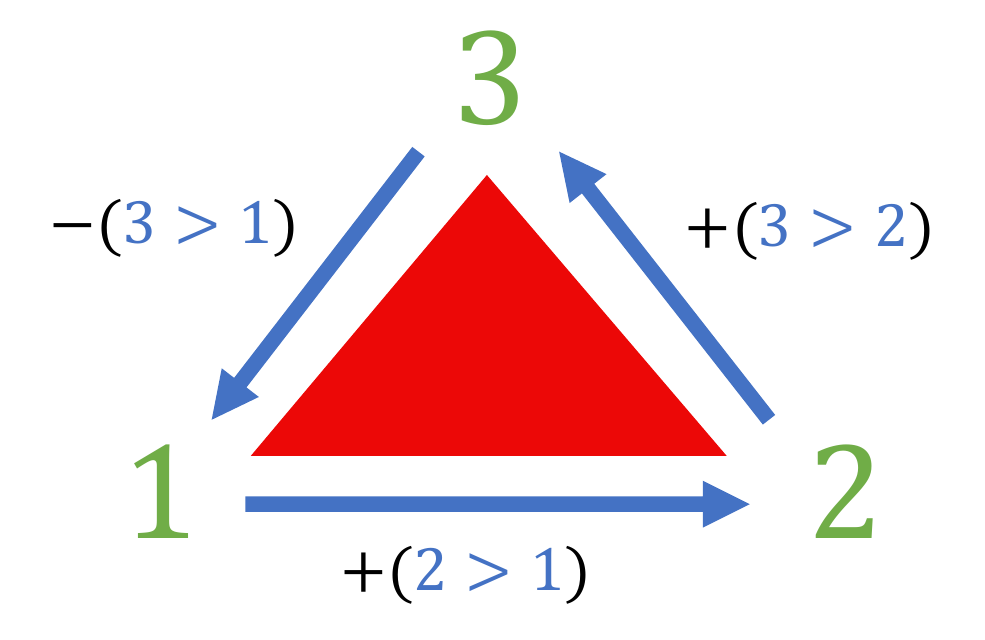}
    \caption{Geometric realisation of the simplicial complex in equation \ref{eq:SC-example-1} and the boundary of $\{1,2,3\}$.}
    \label{fig:SC-example-1}
\end{figure}
\end{example*}

\paragraph{Chain complex.} The $k$-th boundary mapping, $\partial_{k}$, is the map $C_{k}(K) \rightarrow C_{k-1}(K)$. If $k>m-1$ or $k<-1$, then $C_{k}(K):=0$ and $\partial_{k}:=0$, where $m$ is the number of $0$--simplices. Based on this, the chain complex of $K$ is
\begin{equation*}
    0 \rightarrow C_{n-1}(K) \overset{\strut\partial_{n-1}}{\parbox{0.7cm}{\rightarrowfill}} \ldots \overset{\strut\partial_{2}}{\rightarrow} C_{1}(K) \overset{\strut\partial_{1}}{\rightarrow} C_{0}(K) \overset{\strut\partial_{0}}{\rightarrow} C_{-1}(K) \rightarrow 0.
\end{equation*}

We define $\partial^{2}:=\partial \circ \partial=0$. For example, $\partial_{i-1} \circ \partial_{i}=0$. This has the consequence of making the boundary of, say, a solid tetrahedron ($3$--simplex) a set of oriented triangles ($2$--simplices) with a `net flow' of 0 (similar to Stokes’ curl theorem in calculus).

\begin{example*}
Consider the following simplicial complex and its chain complex:
\begin{equation*}
    \begin{aligned}
    &K=\{\{1,2\},\{1\},\{2\},\{3\},\{4\}, \emptyset\}, \\
    &0 \rightarrow C_{1}(K) \overset{\strut\partial_{1}}{\rightarrow} C_{0}(K) \overset{\strut\partial_{0}}{\rightarrow} C_{-1}(K)=0.
    \end{aligned}
\end{equation*}
The boundary map $\partial_{1}$ is $\{1,2\} \mapsto \{2\}-\{1\}$ and all faces in $K_0$ are mapped to the empty set by $\partial_{0}$.
\end{example*}

\paragraph{Homology.} We can now see that our $k$-cycles are $Z_{k}=\operatorname{ker} \partial_{k}$ ($k$-chains $\alpha$ where $\partial_{k}(\alpha)=0$). Whereas, the $k$-boundaries are $B_{k}=\operatorname{im} \partial_{k+1}$ ($k$-chains in the image of $\partial_{k+1}$). Notice, $B_{k} \subset Z_{k}$.

The (reduced) $k$-homology of $K$ is the Abelian group 
\begin{equation*}
    \widetilde{H_{k}}(K):={Z_{k}}/{B_{k}},
\end{equation*}
and we define $\widetilde{H_{m-1}}(K):=\operatorname{ker} \partial_{m-1}$ for $k>m-1$ and $\widetilde{H}_{k}(K):=0$ for $k<0$.

The $k$-th Betti number (number of topological holes) is
\begin{equation*}
    \begin{aligned}
    \beta_{k} &=\operatorname{dim}\left(\widetilde{H_{k}}\right) \\
    &=\operatorname{dim}\left(Z_{k}\right)-\operatorname{dim}\left(B_{k}\right) \\
    &=\operatorname{nullity}\left(\partial_{k}\right)-\operatorname{rank}\left(\partial_{k+1}\right).
    \end{aligned}
\end{equation*}
Note, $\operatorname{nullity}\partial_{0}=\operatorname{dim}\left(C_{0}\right)$.

\begin{example*}
Consider the simplicial complex
\begin{equation*}
    K=\{\{1,2\},\{1,3\},\{2,3\},\{1\},\{2\},\{3\}, \emptyset\},
\end{equation*}
which is the same as that depicted in Figure \ref{fig:SC-example-1} but without the filled-in triangle. The chain complex is
\begin{equation*}
\xymatrix@C+6pc{
  0 \rightarrow \mathbb{Z}^3 \ar[r]^{\spmat{      & \{1,2\} & \{1,3\} & \{2,3\} \\
\{1\} & -1      & -1      & 0       \\
\{2\} & 1       & 0       & -1      \\
\{3\} & 0       & 1       & 1    }}_{\partial_{1}}
  &
  \mathbb{Z}^3 \ar[r]^{(0 \text{ map})}_{\partial_{0}}
  &
  0.
}
\end{equation*}
We may compute its Betti numbers by
\begin{equation*}
    \begin{aligned}
    \beta_{0}&=\operatorname{nullity}\left(\partial_{0}\right)-\operatorname{rank}\left(\partial_{0+1}\right) \\
            & =3-2=1 \\
    \beta_{1}&=\operatorname{nullity}\left(\partial_{1}\right)-\operatorname{rank}\left(\partial_{1+1}\right) \\
    & =1-0=1,
    \end{aligned}
\end{equation*}
and, by definition, $\beta_{>1}=0$ in this example.
\end{example*}

\subsection{Proofs}\label{appendix:proofs}

The following are our proofs for statements in the main text.

\begin{corollary}[Memory capacity is proportional to the number of network connections]
If the connection weights in a Hopfield network are symmetric, then the order of the network's memory capacity is proportional to the number of its connections.
\end{corollary}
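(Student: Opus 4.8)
The plan is to reduce the claim to a signal-to-noise (SNR) analysis of the local field and to identify the single quantity that governs the capacity: the number of connections incident to one neuron. Symmetry of the weights enters at the outset and in two ways. First, with the symmetric Hebbian weights of Equation \ref{eq:trad} the energy is a Lyapunov function for the update rule of Equation \ref{eq:trad-update-rule}, so the dynamics converge to fixed points and ``capacity'' is well defined. Second, symmetry makes the network homogeneous, so that every neuron sees the same connection through each simplex containing it and the SNR computation is uniform across neurons. I would fix a stored pattern $\xi^\nu$, set $S^{(t-1)} = \xi^\nu$ in the argument of $\Theta$ in Equation \ref{eq:trad-update-rule}, and split the local field at neuron $i$ into a \emph{signal} term (the $\mu = \nu$ contribution) and a \emph{crosstalk} term (the $\mu \neq \nu$ contributions).

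The decisive counting step is that both terms are sums over the simplices containing $i$, so the number of summands is $z := |\{\sigma \in K : \sigma \ni i\}|$, the number of connections incident to $i$. After the Hebbian normalisation the signal equals $\xi_i^\nu$ and is of order one, while the crosstalk is a sum of about $(P-1)\,z$ unbiased $\pm 1$ terms; its variance is therefore of order $(P-1)/z$ and its standard deviation of order $\sqrt{P/z}$. A stored bit flips only when the crosstalk overwhelms the unit signal, which by a Gaussian tail estimate occurs with per-bit error probability $\approx \Phi(-\sqrt{z/P})$. A union bound over the $N$ neurons (and $P$ patterns) demanding simultaneous stability forces $\sqrt{z/P} \gtrsim \sqrt{2\ln N}$, i.e. $P \lesssim z/(2\ln N)$; strengthening the requirement from a vanishing error fraction to exact stability of every bit replaces the constant and yields $z/(4\ln N)$, reproducing the two regimes of \cite{McEliece1987}. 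Thus the capacity order is proportional to $z$, the number of connections. Specialising to a fully-connected $D$--skeleton gives $z = \sum_{d=1}^{D}\binom{N-1}{d} \sim \sum_{d=1}^{D} N^{d}$ and recovers Lemma \ref{lemma:mixed-network}; equivalently, one may read the result off the single-degree capacities $N^{d-1}$ of \cite{Demircigil2017}, since a degree-$d$ connection contributes $\binom{N-1}{d-1} \sim N^{d-1}$ connections per neuron.

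The main obstacle is making the Gaussian approximation of the crosstalk rigorous: the $(P-1)\,z$ summands are \emph{not} independent, because simplices that share vertices reuse the same pattern bits $\xi^\mu$, so a naive central limit theorem does not apply. I would handle this either by bounding the second and fourth moments of the crosstalk directly, showing that the shared-vertex correlations contribute only lower-order terms when $P$ is finite or grows more slowly than $z$, or, in the $P, N \to \infty$ regime, by replacing the central limit theorem with the exponential-moment large-deviation bound of \cite{Demircigil2017}, which controls the tail without independence. A secondary, bookkeeping obstacle is the precise reading of ``number of connections'': the argument gives capacity of order $z$ (connections per neuron), and one must invoke symmetry and homogeneity to relate $z$ to the total connection count, up to the degree-dependent factor of $d+1$ neurons per $d$--simplex, so that the statement holds as phrased.
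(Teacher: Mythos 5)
Your proposal is correct in substance but takes a genuinely different route from the paper. The paper's proof of this corollary is essentially a two-line bookkeeping argument: it \emph{cites} the known capacity $N^{d-1}$ of a degree-$d$ network from \cite{Newman1988,Demircigil2017}, notes that such a network has $N^d/d!$ connections, and verifies by arithmetic that the ratio of the one to the other is a constant (up to the per-neuron factor $N/d!$), so that capacity and connection count have the same order. You instead re-derive the capacity from first principles via a signal-to-noise decomposition of the local field, identifying the per-neuron connection count $z$ as the governing quantity and recovering the $z/(2\ln N)$ and $z/(4\ln N)$ thresholds. This is not the paper's proof of the corollary --- it is, almost verbatim, the paper's proof of Lemma \ref{lemma:mixed-network} in Appendix \ref{appendix:proofs}, which carries out exactly this signal/noise split, Gaussian tail estimate, and union bound over neurons for the $D$--skeleton. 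What your approach buys is self-containment and an explicit identification of \emph{why} capacity tracks connection count (more incident simplices means more signal summands relative to crosstalk variance); what it costs is the rigour gap you correctly flag --- the crosstalk summands share pattern bits across simplices with common vertices, so the central limit theorem does not apply naively, and the paper's corollary proof sidesteps this entirely by leaning on the already-established single-degree results. Your reduction of the total connection count to the per-neuron count $z$ via symmetry, and the observation that a degree-$d$ connection contributes $\binom{N-1}{d-1}\sim N^{d-1}$ incident connections per neuron, is a cleaner statement of the proportionality than the paper's $\frac{N^{d-1}}{N^d/d!}\cdot\frac{N}{d!}=1$ identity, which leaves the role of the $N/d!$ factor implicit.
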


\begin{proof}
Let $d$ be the degree of connections in a Hopfield network with $N$ neurons. The explicit or implicit number of connections in such a network is $N^{d}$. By a simple counting argument, the number of repeated connections between any set of $d$ neurons is interpreted as $d!$. By \cite{Newman1988,Demircigil2017}, the order of such a network's memory capacity is $N^{d-1}$. So the following relationship holds:
$$\dfrac{N^{d-1}}{{N^d}/{d!}} \cdot \dfrac{N}{d!} = \dfrac{d!}{N} \cdot \dfrac{N}{d!} = 1.$$
\end{proof}

\begin{lemma}[Fully-connected mixed Hopfield networks]
A fully-connected mixed Hopfield network based on a $D$--skeleton with $N$ neurons and $P$ patterns has, when $N \rightarrow \infty$ and $P$ is finite: fixed point attractors at the memory patterns and dynamic convergence towards the fixed point attractors within a finite Hamming distance $\delta$. When $P \rightarrow \infty$ with $N \rightarrow \infty$, the network has capacity to store up to $(\sum_{d=1}^D N^d)/(2 \text{ ln } N)$ memory patterns (with small retrieval errors) or $(\sum_{d=1}^D N^d)/(4 \text{ ln } N)$ (without retrieval errors).
\end{lemma}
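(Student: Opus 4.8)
The plan is to exploit the additive structure of the mixed network: because the energy in Equation~\ref{eq:trad} is a sum over all simplices of $K$, the local field driving the update rule \ref{eq:trad-update-rule} splits as $h_i = \sum_{d=1}^{D} h_i^{(d)}$, where $h_i^{(d)}$ collects the contributions of the $d$--simplices incident to neuron $i$. Writing $w(\sigma)=\frac{1}{N}\sum_\mu \xi_\sigma^\mu$ and factoring $\xi_\sigma^\mu = \xi_i^\mu \prod_{j\in\sigma\setminus i}\xi_j^\mu$, each summand becomes, after substituting $\tau=\sigma\setminus i$,
$$h_i^{(d)} = \frac{1}{N}\sum_{\mu=1}^{P}\xi_i^\mu \sum_{\substack{\tau\subseteq[N]\setminus i\\ |\tau|=d}}\prod_{j\in\tau}\xi_j^\mu S_j.$$
I would analyse each homogeneous degree by the standard signal-to-noise decomposition and then recombine, since the different dimensions enter $h_i$ only through this clean sum.

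For the first, finite-$P$ claim, I would evaluate $h_i$ at $S=\xi^\nu$. The $\mu=\nu$ term of $h_i^{(d)}$ equals $\xi_i^\nu \frac{1}{N}\binom{N-1}{d}\sim \xi_i^\nu\, N^{d-1}/d!$, a \emph{signal} aligned with $\xi_i^\nu$, while the $\mu\neq\nu$ terms are sums of products of independent $\pm1$ variables with zero mean. With $P$ fixed and $N\to\infty$, the top-degree signal $\sim\xi_i^\nu N^{D-1}/D!$ dominates the fluctuations of every noise term, so $\Theta(h_i)=\xi_i^\nu$ and each memory pattern is a fixed point. For convergence within a Hamming ball of radius $\delta$, I would repeat the estimate for a state disagreeing with $\xi^\nu$ on at most $\delta$ coordinates: the misaligned spins perturb the signal only by a lower-order amount, the sign of $h_i$ is preserved, and the synchronous update moves the state strictly closer to $\xi^\nu$; symmetry of the weights $w(\sigma)$ then furnishes $E$ as a Lyapunov function guaranteeing the dynamics settle at the attractor.

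For the capacity claim ($P\to\infty$), I would carry out the signal-to-noise estimate at each degree. The degree-$d$ signal is of order $N^{d-1}/d!$ and, since distinct $d$--subsets give uncorrelated product terms, the central limit theorem makes the degree-$d$ noise approximately Gaussian with variance $\sim P\,N^{d-2}/d!$, yielding a per-neuron error probability $\approx\Phi\!\left(-N^{d/2}/\sqrt{P\,d!}\right)$. Demanding that all $N$ neurons recall correctly forces a union bound over the $N$ sites, and it is exactly this requirement that produces the logarithmic factor: the threshold for a vanishing \emph{expected fraction} of errors is $P\sim N^d/(2\ln N)$, while the stricter threshold for \emph{no} error with high probability is $P\sim N^d/(4\ln N)$, matching the classical constants \citep{Newman1988,Demircigil2017}. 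Invoking Corollary~\ref{corollary:proportional-memory} and noting that a $D$--skeleton carries, simultaneously, the full complement of connections of every degree $d=1,\dots,D$, the single-degree budgets add, giving the stated totals $(\sum_{d=1}^{D}N^d)/(2\ln N)$ and $(\sum_{d=1}^{D}N^d)/(4\ln N)$.

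The main obstacle is the large-deviation bookkeeping behind the $2\ln N$ and $4\ln N$ constants together with the justification that the capacities genuinely add. Concretely, one must verify that the degree-wise noise contributions $h_i^{(d)}$ are asymptotically independent (or at least that their variances sum without a cross-degree correction that would spoil the leading behaviour), so that the union-bound tail estimate for the full mixed field reduces to the sum of the single-degree estimates rather than being degraded by correlations between overlapping simplices of different dimensions. Controlling these higher-order product terms and confirming that the Gaussian approximation stays valid out to the relevant tail scale is where the real work lies; the remaining steps — the binomial asymptotics and the sign computations establishing the fixed points — are routine.
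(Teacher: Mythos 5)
Your proposal follows essentially the same route as the paper's proof: decompose the local field by simplex dimension, identify an order-$1$-per-degree (or order-$N^{d-1}$, depending on normalisation) signal term and a mean-zero noise term, invoke the Central Limit Theorem for the noise, and extract the $2\ln N$ and $4\ln N$ constants from a Gaussian tail estimate combined with a union bound over sites (and patterns). Two differences are worth noting. First, you keep the $1/N$ normalisation of Equation~\ref{eq:trad}, so your degree-$d$ signal scales as $N^{d-1}/d!$ and the top degree dominates; the paper's proof instead silently renormalises each dimension's contribution by $1/|K_d|\approx d!/N^d$ (Equation~\ref{eq:local-field}), making every degree contribute an order-$1$ signal with noise standard deviation $\approx\sqrt{P/N^d}$. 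Your version is actually the more faithful reading of the model as defined, but the two lead to the same qualitative conclusion. Second, the obstacle you single out --- whether the degree-wise noise contributions combine additively --- is more benign than you fear: for $\mu\neq\nu$ the noise summands are products of independent $\pm1$ variables over index sets of \emph{different cardinalities}, so cross-degree covariances vanish identically and the variances simply add; the paper exploits exactly this by collapsing everything into a single aggregate parameter $z=\sum_{d=1}^{D}P/N^d$ and applying one error-function estimate. That said, you should be aware that the final inversion from the stability condition $z=1/(2\ln N)$ to the claimed capacity $(\sum_{d=1}^{D}N^d)/(2\ln N)$ is the loosest step in the paper as well (note $1/\sum_d N^{-d}\neq\sum_d N^d$), and that in the combined signal-to-noise ratio the $d=D$ term dominates both numerator and denominator, so the ``sum over degrees'' in the capacity formula is best read as a leading-order $N^{D}$ statement consistent with Corollary~\ref{corollary:proportional-memory} rather than a literal addition of independent per-degree budgets.
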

\begin{proof}
Let $K$ be a $D$--skeleton. Let $S_i^{(t)}$ be the spin of neuron $i$ at time-step $t$, and let the spin correspond to a stored pattern, $S_i^{(t)}=\xi_i^1$. To be in a fixed point, the local field $h_i$ applied to $i$ must satisfy the inequality $S_i^{(t)} h_i > 0$, meaning the local field being applied to the neuron must be of the same sign as the present spin.

In the case of the mixed network, based on the simplicial Hopfield Equations \ref{eq:trad}--\ref{eq:trad-update-rule}, the local field is
\begin{equation} \label{eq:local-field}
\begin{split}
    h_i\{S^{(t)}\} & = \dfrac{1}{|K_1|} \sum_{\sigma \in K_1}^{|K_1|} \sum_{\mu=1}^P \xi_i^\mu \xi_{\sigma \setminus i}^\mu S_{\sigma \setminus i}^{(t-1)} + \dfrac{1}{|K_2|} \sum_{\sigma \in K_2}^{|K_2|} \sum_{\mu=1}^P \xi_i^\mu \xi_{\sigma \setminus i}^\mu S_{\sigma \setminus i}^{(t-1)} + \dots \\
    & = \sum_{d=1}^{D} \left( \dfrac{1}{|K_d|} \sum_{\sigma \in K_d}^{|K_d|} \sum_{\mu=1}^P \xi_i^\mu \xi_{\sigma \setminus i}^\mu S_{\sigma \setminus i}^{(t-1)} \right).
\end{split}
\end{equation}

Because $K$ is a $D$--skeleton, each dimension $K_d$ will have ${N \choose d}$ elements. Using Stirling's Approximation for the binomial coefficient, ${N \choose d} \approx N^d/d!$ (for $N \gg d$, which holds if $\text{dim}(K)$ is small, which we argue it should normally be for both computational and biological reasons), we can simplify Equation \ref{eq:local-field} slightly

\begin{equation} \label{eq:local-field-simple}
    h_i\{S^{(t)}\} = \sum_{d=1}^{D} \left( \dfrac{d!}{N^d} \sum_{\sigma \in K_d} \sum_{\mu=1}^P \xi_i^\mu \xi_{\sigma \setminus i}^\mu S_{\sigma \setminus i}^{(t-1)} \right).
\end{equation}

To analyse the stability of a pattern, we set $S_i^{(t)}=\xi_i^1$, where the choice of $1$ is arbitrary (since the weights are symmetric). Substituting $\xi^1$ for $S$ and Equation \ref{eq:local-field-simple} for $h$, the inequality we must satisfy for $i=1$ becomes

\begin{equation}\label{eq:local-field-expanded}
    \xi_1^1 h_1 = \sum_{d=1}^{D} \left( \dfrac{d! \prod_m^d (N-m)}{N^d} + \dfrac{d!}{N^d} \sum_{\sigma \in K_d} \sum_{\mu=2}^P \xi_1^1 \xi_1^\mu \xi_{\sigma \setminus i}^\mu \xi_{\sigma \setminus i}^1 \right) > 0.
\end{equation}

Notice in Equation \ref{eq:local-field-expanded} we decomposed the summation over patterns into \textit{signal} terms (for the pattern we are analysing) and \textit{noise} terms (for the contribution of all other patterns). In the limit of $N \rightarrow \infty$, the signal terms are fixed numbers (of order $1$) and, by the Central Limit Theorem, since the noise terms are sums of random numbers (essentially, random walks), they will have means of $0$ and standard deviations of 

\begin{equation}
    \dfrac{d!}{N^d} \sqrt{(P-1) \prod_m^d (N-m)},
\end{equation}

which we can approximate as $\sqrt{P/N^d}$. We can see that as $d$ increases, the noise terms reduce in variance. However, if $P$ remains fixed and $N$ is sufficiently large, the noise terms become negligible compared to the signal terms. This therefore guarantees that every pattern will be a \textit{fixed point}.

Furthermore, these fixed points will remain highly stable against random noise. Suppose we randomly flip a finite number $\delta$ of spins away from a pattern $\xi=S$ (a fixed point). The signal terms' strengths are reduced by $2\delta$ but still of order $1$, whereas the noise terms remain of order $N^{-1/2}$. Therefore, states within Hamming distance $\delta$ away from $\xi$ will \textit{converge} to the fixed point.

Now let $P \rightarrow \infty$ with $N$. The total variance of the noise terms is

\begin{equation}
    v = \sum_{d=1}^D \sqrt{P/N^d}.
\end{equation}

The probability of stability of neuron $i$ in Equation \ref{eq:local-field-expanded} is the probability that the noise terms are larger than $-1$; at $\leq -1$ they will overcome the signal terms. This probability is

\begin{equation}\label{eq:prob-stable}
    \text{Pr}(\xi_1^1 h_1 > 0) = \dfrac{1}{\sqrt{2 \pi v^2}} \int_{-1}^{\infty} dx  \text{ exp}\left(-\dfrac{x^2}{2v^2}\right) = \dfrac{1}{2} \left[ 1 + \text{erf} \left( \sqrt{\dfrac{1}{2v^2}} \right) \right],
\end{equation}

where

\begin{equation}
    \text{erf}(x) = \dfrac{2}{\sqrt{\pi}} \int_0^x dte^{-t^2}.
\end{equation}

For small $v$ (which this is, especially as $d$ increases and relative to the signal), the value of the error function in Equation \ref{eq:prob-stable} will be large and can therefore be approximated \citep{gradshteyn2007} as

\begin{equation}
    \text{erf}(x) \approx 1 - \dfrac{1}{\sqrt{\pi}x} e^{-x^2}.
\end{equation}

We can now approximate Equation \ref{eq:prob-stable} as

\begin{equation}
    \text{Pr}(\xi_1^1 h_1 > 0) \approx 1 - \sqrt{\dfrac{z}{2\pi}} \text{exp}\left(-\dfrac{1}{2z} \right),
\end{equation}

where

\begin{equation}
    z = \sum_{d=1}^D P/N^d.
\end{equation}

We can now calculate the probability of a stable pattern, i.e., that the inequality $\xi_i^1 h_i > 0$ is satisfied for all $i$, with

\begin{equation}\label{eq:prob-stable-pattern}
\begin{split}
    \text{Pr}(\text{stable pattern}) &\approx \left[1 - \sqrt{\dfrac{z}{2\pi}} \text{exp}\left(-\dfrac{1}{2z} \right) \right]^N \\
    &\approx 1 - N \sqrt{\dfrac{z}{2\pi}} \text{exp}\left(-\dfrac{1}{2z} \right).
\end{split}
\end{equation}

Since $N \rightarrow \infty$, Equation \ref{eq:prob-stable-pattern} will be close to $1$ as the second term will be negligible. This will always be true if

\begin{equation}
    z = \dfrac{1}{2 \text{ln} N}.
\end{equation}

Therefore, since we have $N$ neurons with $\sum_{d=1}^D N^d$ connections between them, the maximum number of patterns we may store (while accepting small errors) is

\begin{equation}
    p_c = \dfrac{\sum_{d=1}^D N^d}{2 \text{ ln } N}.
\end{equation}

Or, if we cannot accept errors,

\begin{equation}
    p_c = \dfrac{\sum_{d=1}^D N^d}{4 \text{ ln } N}.
\end{equation}
\end{proof}

An additional informal perspective to consider regarding memory capacity is to notice that if $P$ scales with $N$, the ratio between the signal and noise terms will be constant. And, since $P \propto N^d$, the theoretical memory capacity scales polynomially with $N$ and linearly with $D$, and so the theoretical capacity is approximately $\sum_{d=1}^D c_d \cdot N^d$, where $c_d$ is a constant which depends on $d$.

\subsection{Numerical implementation of similarity measures}\label{appendix:further-sim-details}
As in \cite{Millidge2022}, in order to fairly compare similarity functions, we: (i) normalised similarity scores (separately for each similarity function) so their sum would be equal to 1 (since different measures had intrinsically different scales); and (ii) for distance measures, used the normalised reciprocal (since distance measures return low values for similar inputs, but the model relies on high values being returned for similar inputs, as in the dot product).

\subsection{Supplementary figures and tables}\label{appendix:supplementary-figures}

\begin{figure}[h]
    \centering
    \includegraphics[width=\textwidth]{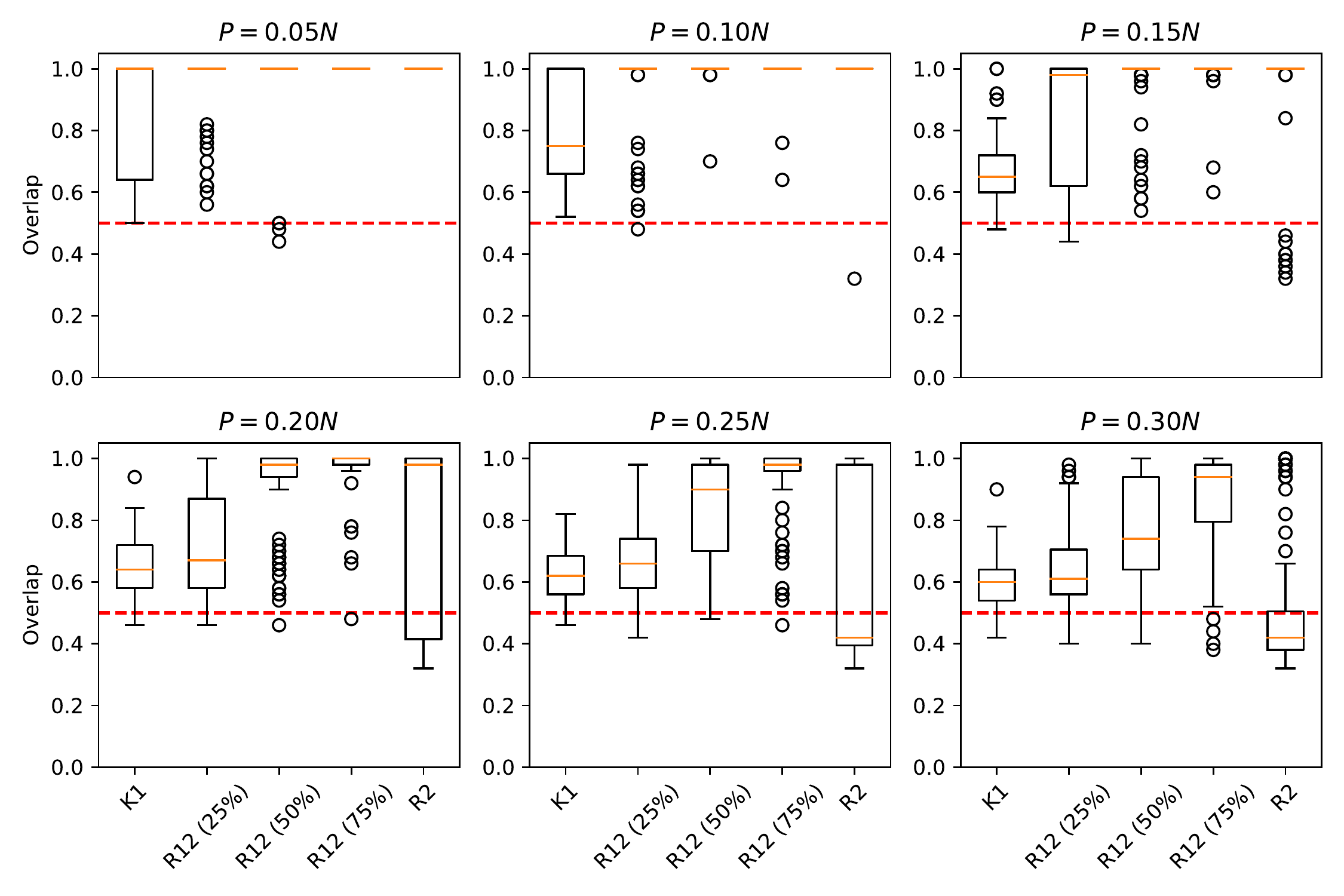}
    \caption{Box and whisker plots of final overlap distributions from traditional simplicial Hopfield networks with varying numbers of embedded patterns. Orange lines indicate the median. The red dashed line indicates an overlap of $0.5$, chance.}
    \label{fig:overlap-distributions-trad-extended}
\end{figure}

\begin{table}[h]
\centering
\caption{Pearson correlation coefficients ($r$) between overlap and $\beta_1$ for mixed diluted networks from Table \ref{tab:trad-results}. Bolded values are significant at $\alpha=0.05$ (without multiple comparisons adjustment). With multiple comparisons adjustment, there are no significant correlations. Given their construction, all networks have $\beta_0=1$, and although there is a small chance of 2--dimensional holes in some networks, we found that $\beta_{\geq2}=0$ for all simulated networks.}\label{tab:trad-homology}
\renewcommand{\arraystretch}{1.35}
\begin{tabular}{|c|c|c|c|c|c|c|}
\hline
\textit{No. patterns}   & $0.05N$         & $0.1N$          & $0.15N$         & $0.2N$          & $0.3N$          \\ \hline
R$\overline{12}$ & $0.02$ & $0.09$ & \boldmath$0.21$ & $-0.15$ & $-0.01$ \\ \hline
R1$\overline{2}$ & N/A & $-0.05$ & $-0.1$ & $0.01$ & $-0.08$ \\
\hline
\end{tabular}
\end{table}

\begin{figure}[h]
    \centering
    \includegraphics[width=\textwidth]{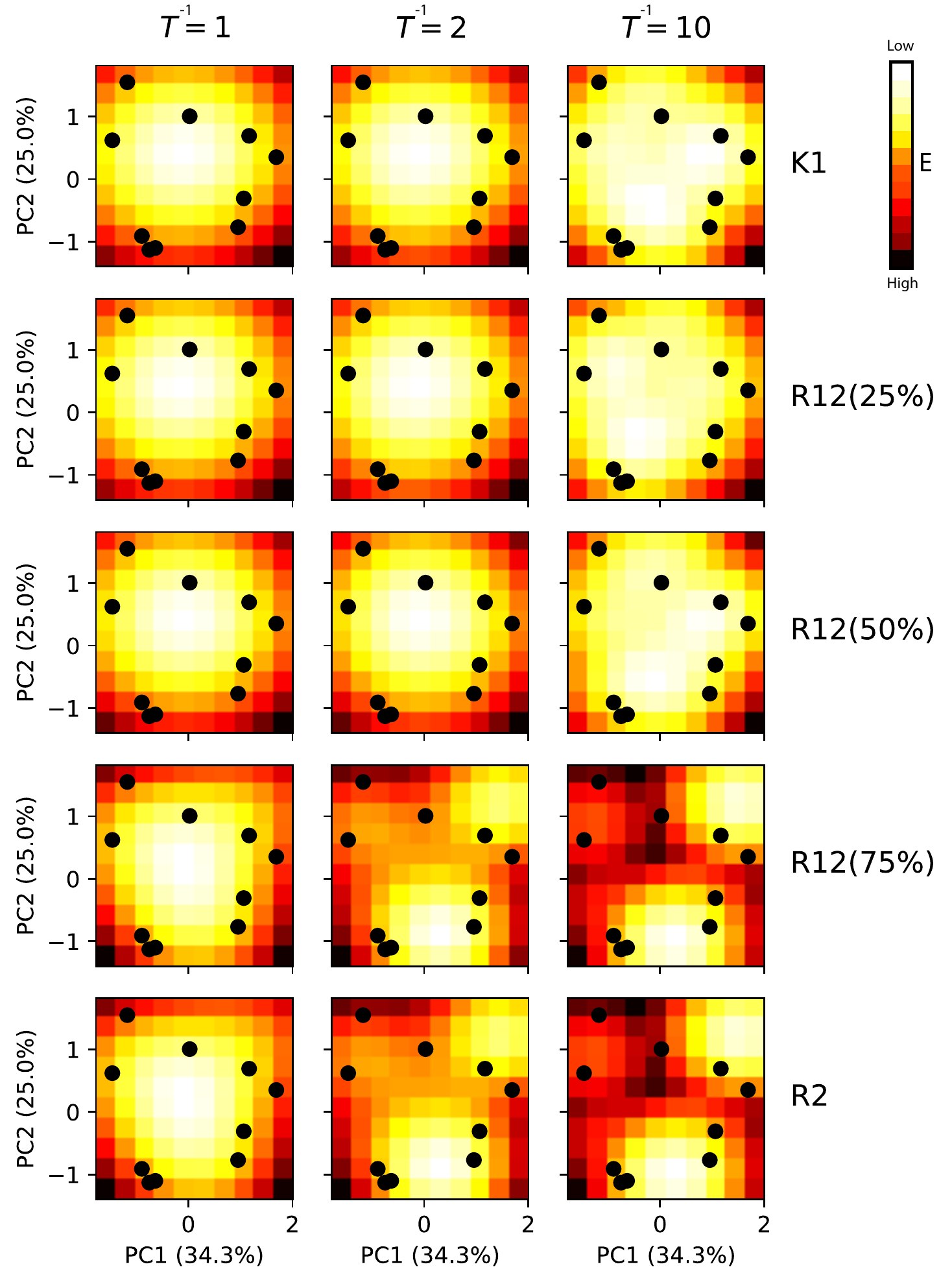}
    \caption{Relative energies of continuous modern networks in a $10 \times 10$ grid plane of the first two dimensions of PCA space, as computed using the memory patterns. Each network has $N=10$ and the same $P=10$ memory patterns are embedded. Network conditions vary by row and inverse temperatures vary by column. Black dots are projections of the $10$ embedded patterns in the PCA space. The combined explained variance of the first two principle components is 59.3\% of the memory patterns.}
    \label{fig:energy-landscape-comparisons}
\end{figure}

\begin{table}[h]
\centering
\caption{Extended list of network condition keys (top row), their number of non-zero weights for 1--, 2--, and 3--simplices (second, third, and fourth rows). $N$ is the number of neurons. For simulation, the number of simplices at each dimension are rounded to the nearest integer.}\label{tab:extended-conditions}
\renewcommand{\arraystretch}{1.35}
\begin{tabular}{c|c|c|c|c|c|c|}
\cline{2-6}
                                   & R3             & R$\overline{1}$23 & R1$\overline{2}$3 & R12$\overline{3}$ & R$\overline{123}$     \\ \hline
\multicolumn{1}{|c|}{1--simplices} & 0              & $0.50$$N \choose 2$ & $0.25$$N \choose 2$ & $0.25$$N \choose 2$ & $1/3$$N \choose 2$      \\ \hline
\multicolumn{1}{|c|}{2--simplices} & 0              & $0.25$$N \choose 2$ & $0.50$$N \choose 2$ & $0.25$$N \choose 2$ & $1/3$$N \choose 2$  \\ \hline
\multicolumn{1}{|c|}{3--simplices} & $N \choose 2$  & $0.25$$N \choose 2$ & $0.25$$N \choose 2$ & $0.50$$N \choose 2$ & $1/3$$N \choose 2$  \\ \hline
\end{tabular}
\end{table}

\begin{table}[h]
\centering
\caption{Mean $\pm$ standard deviation of overlap distributions ($n=100$) from traditional simplicial Hopfield networks with varying numbers (top row) of random binary patterns. Keys per Table \ref{tab:extended-conditions}. At all pattern loadings, a one-way ANOVA showed significant variance between the networks ($p<10^{-12}$, $F>13.25$).}\label{tab:extended-binary-results}
\renewcommand{\arraystretch}{1.35}
\begin{tabular}{|c|c|c|c|c|c|c|}
\hline
\textit{No. patterns}   & $0.05N$         & $0.1N$          & $0.15N$         & $0.2N$          & $0.3N$          \\ \hline
R$\overline{1}23$ & \boldmath$1 \pm 0$ & $0.99 \pm 0.08$ & $0.97 \pm 0.17$ & $0.93 \pm 0.22$ & $0.89 \pm 0.15$ \\ \hline
R1$\overline{2}$3 & \boldmath$1 \pm 0$ & \boldmath$1 \pm 0$ & $0.98 \pm 0.05$ & $0.95 \pm 0.17$ & $0.91 \pm 0.18$ \\ \hline
R12$\overline{3}$ & \boldmath$1 \pm 0$       & \boldmath$1 \pm 0$ & \boldmath$1 \pm 0$ & $0.96 \pm 13$ & $0.93 \pm 0.13$ \\ \hline
\textbf{R\boldmath$\overline{123}$} & \boldmath$1 \pm 0$       & \boldmath$1 \pm 0$ & \boldmath$1 \pm 0$ & \boldmath$1 \pm 0$ & \boldmath$1 \pm 0$ \\ \hline
R3         & $0.94 \pm 0.06$ & $0.78 \pm 0.14$ & $0.52 \pm 0.15$ & $0.51 \pm 0.13$ & $0.51 \pm 0.14$ \\ \hline
\end{tabular}
\end{table}

\begin{table}[h]
\centering
\caption{Mean $\pm$ standard deviation ($n=10$) of fraction of correctly recalled MNIST memory patterns in simplicial Hopfield networks at a memory loading of 1000 memories. Network performance varied significantly.}\label{tab:mnist-tab}
\renewcommand{\arraystretch}{1.35}
\begin{tabular}{c|c|c|c|c|}
\cline{2-5}
                                  & K1              & R$\overline{1}2$       & R1$\overline{2}$       & \textbf{R\boldmath$\overline{123}$} \\ \hline
\multicolumn{1}{|c|}{Euclidean}   & $1 \pm 0$        & $1 \pm 0$       & $1 \pm 0$       &   \boldmath$1 \pm 0$   \\ \hline
\multicolumn{1}{|c|}{Manhattan}   & $1 \pm 0$        & $1 \pm 0$       & $1 \pm 0$       &   \boldmath$1 \pm 0$   \\ \hline
\multicolumn{1}{|c|}{Dot Product} & $0.93 \pm 0.03$ & $0.93 \pm 0.02$ & $0.94 \pm 0.02$ &   \boldmath$1 \pm 0$   \\ \hline
\multicolumn{1}{|c|}{ced}         & -                & $0.90 \pm 0.02$ & $0.91 \pm 0.03$ &   \boldmath$1 \pm 0$   \\ \hline
\multicolumn{1}{|c|}{cmd}         & -                & $0.95 \pm 0.03$ & $0.97 \pm 0.03$ &   \boldmath$1 \pm 0$  \\ \hline
\end{tabular}
\end{table}

\begin{table}[h]
\centering
\caption{Same as Table \ref{tab:mnist-tab} but for CIFAR-10.}\label{tab:cifar-tab}
\renewcommand{\arraystretch}{1.35}
\begin{tabular}{c|c|c|c|c|}
\cline{2-5}
                                  & K1              & R$\overline{1}2$       & R1$\overline{2}$       & \textbf{R\boldmath$\overline{123}$} \\ \hline
\multicolumn{1}{|c|}{Euclidean}   & $0.31 \pm 0.08$ & $0.39 \pm 0.08$ & $0.51 \pm 0.07$ & \boldmath$0.64 \pm 0.08$    \\ \hline
\multicolumn{1}{|c|}{Manhattan}   & $0.70 \pm 0.06$ & $0.77 \pm 0.06$ & $0.90 \pm 0.05$ & \boldmath$0.97 \pm 0.04$     \\ \hline
\multicolumn{1}{|c|}{Dot Product} & $0.50 \pm 0.07$ & $0.56 \pm 0.07$ & $0.68 \pm 0.06$ & \boldmath$0.72 \pm 0.08$     \\ \hline
\multicolumn{1}{|c|}{ced}         & -               & $0.61 \pm 0.06$ & $0.74 \pm 0.06$ & \boldmath$0.81 \pm 0.07$     \\ \hline
\multicolumn{1}{|c|}{cmd}         & -               & $0.65 \pm 0.07$ & $0.91 \pm 0.06$ & \boldmath$0.99 \pm 0.02$    \\ \hline
\end{tabular}
\end{table}

\begin{table}[h]
\centering
\caption{Same as Table \ref{tab:mnist-tab} but for Tiny ImageNet.}\label{tab:imagenet-tab}
\renewcommand{\arraystretch}{1.35}
\begin{tabular}{c|c|c|c|c|}
\cline{2-5}
                                  & K1              & R$\overline{1}2$       & R1$\overline{2}$       & \textbf{R\boldmath$\overline{123}$} \\ \hline
\multicolumn{1}{|c|}{Euclidean}   & $0.31 \pm 0.15$ & $0.34 \pm 0.14$ & $0.55 \pm 0.12$ & \boldmath$0.61 \pm 0.15$     \\ \hline
\multicolumn{1}{|c|}{Manhattan}   & $0.63 \pm 0.10$ & $0.70 \pm 0.08$ & $0.84 \pm 0.08$ & \boldmath$0.91 \pm 0.09$     \\ \hline
\multicolumn{1}{|c|}{Dot Product} & $0 \pm 0$       & $0 \pm 0$       & $0 \pm 0$       &  $0 \pm 0$    \\ \hline
\multicolumn{1}{|c|}{ced}         & -               & $0.51 \pm 0.14$ & $0.65 \pm 0.13$ &  \boldmath$0.70 \pm 0.11$    \\ \hline
\multicolumn{1}{|c|}{cmd}         & -               & $0.71 \pm 0.08$ & $0.92 \pm 0.06$ &  \boldmath$0.95 \pm 0.06$    \\ \hline
\end{tabular}
\end{table}

\end{document}